\newtheorem{definition}{Definition}
\newtheorem{proposition}{Proposition}
\newcommand{\ie}{i.e.\xspace}
\newcommand{\eg}{e.g.\xspace}
\title{Functional Matching of Logic Subgraphs: \\Beyond Structural Isomorphism}
\author{Ziyang Zheng\quad
Kezhi Li\quad
Zhengyuan Shi\quad
Qiang Xu\quad
\\
The Chinese University of Hong Kong \\
{\small\texttt{\{zyzheng23,kzli24,zyzshi21,qxu\}@cse.cuhk.edu.hk}}\\
}
\begin{document}

\maketitle

\begin{abstract}
Subgraph matching in logic circuits is foundational for numerous Electronic Design Automation (EDA) applications, including datapath optimization, arithmetic verification, and hardware trojan detection. However, existing techniques rely primarily on structural graph isomorphism and thus fail to identify function-related subgraphs when synthesis transformations substantially alter circuit topology. To overcome this critical limitation, we introduce the concept of \emph{functional subgraph matching}, a novel approach that identifies whether a given logic function is implicitly present within a larger circuit, irrespective of structural variations induced by synthesis or technology mapping. Specifically, we propose a two-stage multi-modal framework: (1) learning robust functional embeddings across AIG and post-mapping netlists for functional subgraph detection, and (2) identifying fuzzy boundaries using a graph segmentation approach. Evaluations on standard benchmarks (ITC99, OpenABCD, ForgeEDA) demonstrate significant performance improvements over existing structural methods, with average $93.8\%$ accuracy in functional subgraph detection and a dice score of $91.3\%$ in fuzzy boundary identification. The source code and implementation details can be found at 
\href{https://github.com/zyzheng17/Functional_Subgraph_Matching-Neurips25}{\textit{\textcolor{blue}{our repository}}}.


\end{abstract}

\section{Introduction}
\label{sec:intro}

Subgraph matching—identifying smaller graphs within larger ones—is a fundamental task in graph analysis, with pivotal applications spanning social network mining, bioinformatics, and Electronic Design Automation (EDA). 

In the context of EDA, subgraph matching involves searching for specific circuit patterns embedded within larger circuits. This capability directly supports critical tasks such as circuit optimization, verification, and security analyses. For example, verifying complex arithmetic circuits like multipliers typically requires recognizing embedded small functional units (e.g., half-adders) within larger netlists, enabling algebraic simplifications and correctness proofs~\cite{mahzoon2018polycleaner,mahzoon2019revsca}. Similarly, during template-based synthesis, accurately locating predefined subgraphs allows their replacement with highly optimized standard cells, thereby significantly improving power, performance, and area (PPA) metrics~\cite{wei2015universal}. Moreover, subgraph matching also plays an essential role in hardware security by enabling the identification of potentially malicious substructures or "hardware trojans"—anomalous subcircuits intentionally embedded to compromise system integrity~\cite{meade2016gate,li2019attacking}.

Traditionally, subgraph matching in graphs is formulated as a \emph{structural isomorphism} problem: determining whether a smaller query graph exactly matches part of a larger target graph in terms of node and edge connectivity. This problem is extensively studied in general graph theory, and classical approaches rely primarily on combinatorial search algorithms~\cite{cordella2004sub, ullmann1976algorithm, cordella2001improved}. However, subgraph isomorphism is an NP-complete problem and thus often suffers from exponential computational complexity in worst-case scenarios. Recently, deep learning methods have emerged to mitigate this computational cost by embedding graphs into continuous latent spaces, significantly accelerating matching tasks~\cite{bai2019simgnn, lou2020neural,ying2024representation}. Within the EDA domain, these techniques have been successfully adapted for transistor-level subcircuit identification~\cite{li2024efficient}. 

However, structure-based matching methods encounter significant limitations in practical EDA tasks, as circuit topologies frequently undergo substantial transformations during logic synthesis and technology mapping. Equivalent logic functions can thus be realized through widely differing structural implementations, driven by design considerations such as timing performance, power consumption, or silicon area. Consequently, exact structural correspondence rarely persists throughout the design process, even when the underlying logic function remains unchanged. This inherent limitation severely restricts the utility of traditional structural matching techniques, particularly in applications requiring cross-stage queries—for example, identifying subgraphs from an abstract netlist (like an And-Inverter Graph, or AIG) within a synthesized, technology-mapped netlist.

Motivated by this critical gap, we introduce an approach explicitly designed to recognize logic functionality irrespective of structural differences. Specifically, our framework determines whether the logic represented by a query subgraph exists implicitly within a candidate graph, independent of structural transformations. 

To formalize this, we propose two key concepts: (1) \textbf{functional subgraph}, representing the circuit logic containment relation independent of structure, and (2) \textbf{fuzzy boundary}, minimal graph regions encapsulating the query's logic despite unclear structural boundaries. Consequently, our methodology, termed \textbf{\emph{functional subgraph matching}}, addresses two sub-tasks: \emph{1. Functional Subgraph Detection}: Determining whether the logic function of a query graph is implicitly contained within a candidate graph; \emph{2. Fuzzy Boundary Identification}: Precisely locating the smallest possible region (the fuzzy boundary) in the candidate graph that encapsulates the query's logic.


To achieve these objectives, we propose a novel two-stage multi-modal framework. In the first stage, we train our model with intra-modal and inter-modal alignment across different graph modalities, enabling robust and cross-stage detection of functional subgraph. In the second stage, we fine-tune our model and formulate fuzzy boundary detection as a graph segmentation task, moving beyond prior approaches that treated boundary identification as an input-output classification problem~\cite{wang2022efficient, wu2023gamora}. By leveraging information from nodes located within the true boundaries, our segmentation approach significantly enhances performance and continuity of fuzzy boundary prediction.

Our experiments demonstrate the effectiveness of the proposed framework. Evaluations conducted across several widely-used benchmarks, ITC99~\cite{ITC99}, OpenABCD~\cite{chowdhury2021openabc} and ForgeEDA~\cite{shi2025forgeeda}, show that our approach significantly surpasses traditional structure-based methods. Specifically, our framework achieves an average accuracy of $93.8\%$ for functional subgraph detection and attains a DICE score of $91.3\%$ for fuzzy boundary detection tasks. In contrast, structure-based baseline methods typically exhibit near-random performance (accuracy close to 50\%) and high variability in precision, recall, and F1-score, underscoring their limitations in capturing implicit functionality. To further validate our method's robustness and generalizability, we additionally propose three function-aware baseline variants by integrating different graph encoders into our framework.

In summary, the contributions of this work include:
\begin{itemize}[itemsep=2pt, topsep=0pt, partopsep=0pt]
    \item Introducing and formally defining the novel concept of functional subgraph matching, clearly distinguishing it from structural isomorphism and functional equivalence.
    \item Developing a two-stage multi-modal embedding framework, leveraging both intra-modal and inter-modal alignments to capture structure-agnostic and function-invariant graph representations. This allows effective functional subgraph detection across different modalities.
    \item Proposing an innovative approach for fuzzy boundary identification by formulating the task as a graph segmentation problem rather than a simple input-output classification problem, significantly enhancing boundary continuity and localization accuracy.
\end{itemize}

\begin{figure}[]
    \centering
    \includegraphics[width=0.8\linewidth]{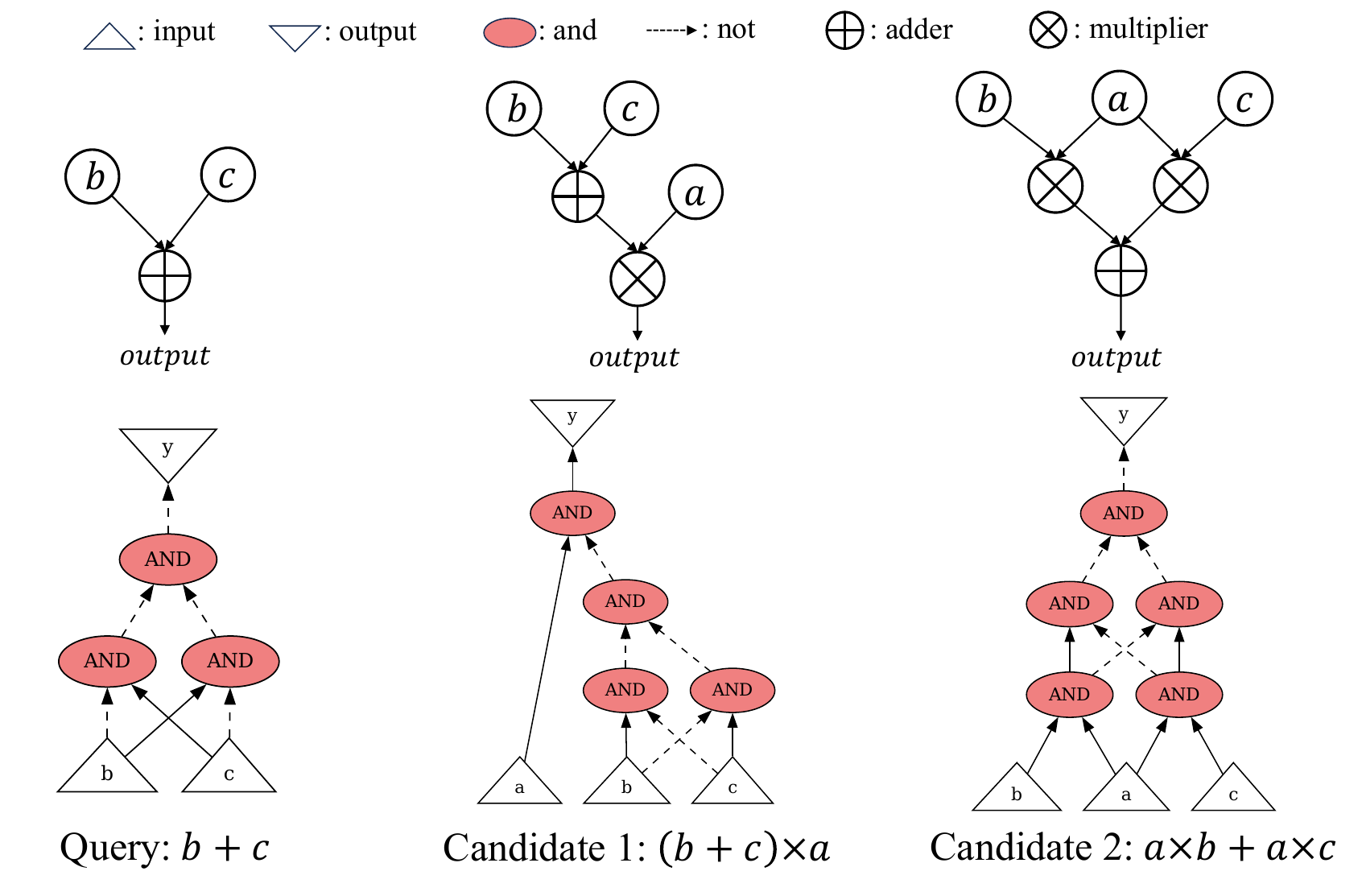}
    \caption{We present the query and candidate graphs.
    \textbf{Above}: 1-bit adder and multiplier. \textbf{Below}: AIG netlist. The query $b+c$ is explicitly contained within the candidate $(b+c) \times a$, making it straightforward to identify the exact subgraph in the candidate. In contrast, the query $b+c$ is implicitly contained within the candidate $a \times b + a \times c$, which implies no subgraph of $a \times b + a \times c$ has the same structure or function as the query graph. }
    \label{fig:toy_equation}
\end{figure}

\vspace{-5pt}
\section{Preliminaries}

\vspace{-2pt}
\subsection{Subgraph Isomorphism Matching}
Subgraph isomorphism matching is a fundamental problem in graph theory with applications across bioinformatics~\cite{bonnici2013subgraph}, social network analysis~\cite{fan2012graph}, and knowledge graphs~\cite{kim2015taming,perez2009semantics}. 
We first recall the standard definition of subgraph isomorphism in Definition~\ref{def:subgraph_iso}.

\begin{definition}[\textbf{Subgraph Isomorphism}]
A graph $\mathcal{Q}$ is an \textbf{isomorphic subgraph} of $\mathcal{G}$ if there exists a subgraph $\mathcal{G}'$ of $\mathcal{G}$ such that $\mathcal{Q}$ is isomorphic to $\mathcal{G}'$.
\label{def:subgraph_iso}
\end{definition}

Then, based on the definition of subgraph isomorphism, the subgraph isomorphism matching task is defined as follows: given a query graph $\mathcal{Q}$ and a target graph $\mathcal{G}$, determine if $\mathcal{Q}$ is isomorphic to a subgraph of $\mathcal{G}$. 
Classical approaches of subgraph isomorphism matching rely primarily on combinatorial search algorithms~\cite{ullmann1976algorithm,cordella2001improved,cordella2004sub}.
Its NP-complete nature, however, makes exact matching computationally intensive.
More recently, graph-neural-network-based (GNN-based) methods have been introduced to learn compact graph embeddings that accelerate the matching process~\cite{bai2019simgnn,lou2020neural,ying2024representation}.
In the EDA domain, \citet{li2024efficient} adapt the NeuroMatch architecture~\cite{lou2020neural} to solve subcircuit isomorphism on transistor-level netlists. 

However, in EDA flow, graphs often represent circuits or computations where structural modifications can preserve the underlying function. Standard subgraph isomorphism struggles with such cases. For instance, as illustrated in Figure~\ref{fig:toy_equation}, a model based on Definition~\ref{def:subgraph_iso} can identify that the structure representing $b+c$ is contained within $a \times (b+c)$, but it cannot identify the functional presence of $b+c$ within the structurally different but functionally related expression $a \times b + a \times c$.

\subsection{Subgraph Equivalence}
The limitation of structure-based subgraph matching motivates considering functional properties. 
Function-aware representation learning has emerged as a pivotal subfield in EDA. Many recent works emphasize functional equivalence, denoted $\mathcal{G}_1 \equiv_{func} \mathcal{G}_2$. DeepGate~\cite{shi2023deepgate2,shi2024deepgate3,zheng2025deepgate4} and DeepCell~\cite{shi2025deepcell} employ disentanglement to produce separate embeddings for functionality and structure, pretraining across various EDA benchmarks and predict functional similarity with a task head. PolarGate~\cite{PolarGate} enhances functional embeddings by integrating ambipolar device principles. FGNN~\cite{wang2022functionality,wang2024fgnn2} applies contrastive learning to align circuit embeddings according to functional similarity.

While graph isomorphism requires structural identity, functional equivalence relates graphs based on their input-output behavior. Building on this, we can define a notion of subgraph relationship based on function, as shown in Definition~\ref{def:subgraph_eq}.

\begin{definition}[\textbf{Subgraph Equivalence}]
A graph $\mathcal{Q}$ is an \textbf{equivalent subgraph} of $\mathcal{G}$ if there exists a subgraph $\mathcal{G}'$ of $\mathcal{G}$ such that $\mathcal{Q} \equiv_{func} \mathcal{G}'$.
\label{def:subgraph_eq}
\end{definition}

This definition allows for functional matching within existing subgraphs. Some works adopt similar ideas for tasks such as arithmetic block identification~\cite{wang2022efficient,he2021graph} and symbolic reasoning~\cite{wu2023gamora,deng2024less}, which aim to find a subgraph with specific functionality rather than structure. Compared to subgraph isomorphism, subgraph equivalence offers more flexibility against local structure modifications. However, Definition~\ref{def:subgraph_eq} still falls short for cases involving global restructuring. As shown in Figure~\ref{fig:toy_equation}, in the example $a \times b + a \times c$, no single subgraph is functionally equivalent to $b+c$. The function $b+c$ is \textbf{implicitly} present but not explicitly represented by a contiguous subgraph.

\subsection{Functional Subgraph}
\label{sec:property}
To address the limitations of both Definition~\ref{def:subgraph_iso} and Definition~\ref{def:subgraph_eq}, we introduce the concept of a functional subgraph, which aims to identify the implicit containment relation between graphs.

\begin{definition}[\textbf{Functional Subgraph}]
A graph $\mathcal{Q}$ is a \textbf{functional subgraph} of $\mathcal{G}$, denoted $\mathcal{Q} \preccurlyeq \mathcal{G}$, if there exists a graph $\mathcal{G}'$ such that $\mathcal{G}' \equiv_{func} \mathcal{G}$ and $\mathcal{Q}$ is isomorphic to a subgraph of $\mathcal{G}'$.
\label{def:subgraph_func}
\end{definition}

This definition captures the idea that the query's function is implicitly contained within the target's function, 
even if the target's structure has undergone functional transformations, and no exact subgraph isomorphic to the query graph can be found in the target graph. By this definition, we know that $b+c$ is a functional subgraph of $a \times b + a \times c$ since $a \times b + a \times c \equiv_{func}a \times (b+c)$ and $b+c$ is an isomorphic subgraph of $a \times (b+c)$.
Furthermore, Definition~\ref{def:subgraph_func} encompasses Definition~\ref{def:subgraph_eq}, i.e., Definition~\ref{def:subgraph_eq} is a special case of Definition~\ref{def:subgraph_func}, as discussed in Proposition~\ref{pro:eq_func} (proof in Appendix~\ref{apdx:property}).
\begin{proposition}
If a graph $\mathcal{Q}$ is an equivalent subgraph of $\mathcal{G}$, then $\mathcal{Q}$ is a functional subgraph of $\mathcal{G}$.
\label{pro:eq_func}
\end{proposition}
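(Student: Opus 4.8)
The plan is to prove the implication directly by exhibiting an explicit witness for the functional-subgraph relation, built from the witness supplied by the equivalent-subgraph hypothesis. By Definition~\ref{def:subgraph_eq}, the assumption that $\mathcal{Q}$ is an equivalent subgraph of $\mathcal{G}$ furnishes a subgraph $\mathcal{H}$ of $\mathcal{G}$ with $\mathcal{Q} \equiv_{func} \mathcal{H}$. To satisfy Definition~\ref{def:subgraph_func}, I need to produce a graph $\mathcal{G}'$ that is functionally equivalent to $\mathcal{G}$ and that contains a subgraph isomorphic to $\mathcal{Q}$. The naive choice $\mathcal{G}' = \mathcal{G}$ does not work, because functional equivalence of $\mathcal{Q}$ and $\mathcal{H}$ does not guarantee that $\mathcal{Q}$ is isomorphic to any subgraph of $\mathcal{G}$; structurally $\mathcal{Q}$ and $\mathcal{H}$ may differ substantially.

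First, I would define $\mathcal{G}'$ by \emph{substituting} $\mathcal{Q}$ for $\mathcal{H}$ inside $\mathcal{G}$: delete the internal gates and edges of $\mathcal{H}$ from $\mathcal{G}$, splice in a disjoint copy of $\mathcal{Q}$, and reconnect the boundary by identifying the inputs and outputs of $\mathcal{Q}$ with the corresponding boundary signals of $\mathcal{H}$ via the interface correspondence implicit in $\mathcal{Q} \equiv_{func} \mathcal{H}$. Since $\mathcal{Q}$ is inserted as a literal subgraph, $\mathcal{Q}$ is isomorphic to a subgraph of $\mathcal{G}'$ (namely, to its own inserted copy), which immediately discharges the second requirement of Definition~\ref{def:subgraph_func}.

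Second, I would verify $\mathcal{G}' \equiv_{func} \mathcal{G}$, and this is the step I expect to be the crux. It rests on the compositional (congruence) property of functional equivalence for circuits: replacing a subcircuit by a functionally equivalent one, with matching input/output interface, leaves the Boolean function computed at every primary output unchanged. Concretely, every primary output of $\mathcal{G}$ factors as a function of the boundary signals of $\mathcal{H}$ together with the signals lying outside $\mathcal{H}$; because $\mathcal{Q}$ and $\mathcal{H}$ realize identical functions at that boundary, the composed functions at the primary outputs of $\mathcal{G}'$ coincide with those of $\mathcal{G}$. I would state this substitution principle explicitly and then appeal to it here.

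Finally, with $\mathcal{G}'$ satisfying both $\mathcal{G}' \equiv_{func} \mathcal{G}$ and containing a subgraph isomorphic to $\mathcal{Q}$, every clause of Definition~\ref{def:subgraph_func} is met, so $\mathcal{Q} \preccurlyeq \mathcal{G}$, completing the proof. The only delicate point is ensuring the interface reconnection is well defined, which uses that functionally equivalent circuits share a matched set of primary inputs and outputs; once this is granted, the remainder of the argument is routine.
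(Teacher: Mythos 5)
Your proposal is correct and takes essentially the same route as the paper: the paper's proof likewise constructs $\bar{\mathcal{G}} = (\mathcal{G}\setminus\mathcal{G}')\cup\mathcal{Q}$ by substituting $\mathcal{Q}$ for the functionally equivalent subgraph and observes that $\bar{\mathcal{G}}\equiv_{func}\mathcal{G}$ while containing $\mathcal{Q}$ literally. Your version is in fact more careful, since you explicitly flag the substitution/congruence principle and the interface matching that the paper's one-line argument leaves implicit.
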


\paragraph{Properties of Functional Subgraph}
In this paper, we assume that a graph obtained by removing some nodes and edges is not functionally equivalent to the original graph, \ie $\forall g \neq \emptyset$, $G \setminus g \not\equiv_{\mathrm{func}} G$.
For example, we consider it illegal to directly connect two NOT gates. Therefore, such connections do not appear in our graph structures. In fact, EDA tools such as ABC~\cite{brayton2010abc} inherently enforce this constraint. According to Definition~\ref{def:subgraph_func}, functional subgraphs exhibit the following properties:
\begin{itemize}
    \item \textit{\textbf{Reflexivity}}: For any graph $\mathcal{G}$, $\mathcal{G}$ is the functional subgraph of $\mathcal{G}$, \ie $\forall \mathcal{G}, \mathcal{G}\preccurlyeq \mathcal{G}$.
    \item \textit{\textbf{Functional Equivalence Preservation}}: If $\mathcal{G}_1$ is a functional subgraph of $\mathcal{G}_2$, then:
        \begin{itemize}
            \item (Left-hand Side) if $\mathcal{G}'_1$ is functionally equivalent to $\mathcal{G}_1$, then $\mathcal{G}'_1$ is a functional subgraph of $\mathcal{G}_2$, \ie if $\mathcal{G}_1\preccurlyeq\mathcal{G}_2$ and $\mathcal{G}'_1 \equiv_{func} \mathcal{G}_1$, then $\mathcal{G}'_1 \preccurlyeq \mathcal{G}_2$.
            \item (Right-hand Side) if $\mathcal{G}'_2$ is functionally equivalent to $\mathcal{G}_2$, then $\mathcal{G}_1$ is a functional subgraph of $\mathcal{G}'_2$, \ie if $\mathcal{G}_1\preccurlyeq\mathcal{G}_2$ and $\mathcal{G}'_2 \equiv_{func} \mathcal{G}_2$, then $\mathcal{G}_1 \preccurlyeq \mathcal{G}'_2$.
        \end{itemize}
        


    \item \textit{\textbf{Transitivity}}: If $\mathcal{G}_1$ is a functional subgraph of $\mathcal{G}_2$ and $\mathcal{G}_2$ is a functional subgraph of $\mathcal{G}_3$, then $\mathcal{G}_1$ is a functional subgraph of $\mathcal{G}_3$, \ie if $\mathcal{G}_1\preccurlyeq\mathcal{G}_2$ and $\mathcal{G}_2 \preccurlyeq \mathcal{G}_3$, then $\mathcal{G}_1 \preccurlyeq \mathcal{G}_3$.
    \item \textit{\textbf{Anti-symmetry}}: If $\mathcal{G}_1$ is a functional subgraph of $\mathcal{G}_2$, then $\mathcal{G}_2$ is a functional subgraph of $\mathcal{G}_1$ if and only if they are functionally equivalent, \ie $\mathcal{G}_1\preccurlyeq\mathcal{G}_2$ and $\mathcal{G}_2\preccurlyeq\mathcal{G}_1$ if and only if $\mathcal{G}_1\equiv_{func}\mathcal{G}_2$.
\end{itemize}

For detailed proofs of the above properties, please refer to Appendix~\ref{apdx:property}. It is worth noting that the subgraph equivalence defined in Definition~\ref{def:subgraph_eq} does \textbf{not} satisfy the \textit{Transitivity} property. This highlights the improved completeness of the functional subgraph in Definition~\ref{def:subgraph_func}.

\subsection{Task Definition}

Based on Definition~\ref{def:subgraph_func}, we define our primary task:
\paragraph{Task \#1: Functional Subgraph Detection.} Given a query graph $\mathcal{Q}$ and a candidate graph $\mathcal{G}$, determine if $\mathcal{Q} \preccurlyeq \mathcal{G}$.

While functional subgraph detection is a decision problem (yes/no), it is often desirable to identify \textbf{which part} of the target graph $\mathcal{G}$ corresponds to the query function $\mathcal{Q}$. However, as shown in Figure~\ref{fig:toy_equation}, due to potential functional transformations, identifying an exact boundary in the original graph $\mathcal{G}$ that perfectly represents $\mathcal{Q}$ can be challenging or impossible. This leads to our second task, which aims to find the smallest region in $\mathcal{G}$ that encapsulates the function of $\mathcal{Q}$.

\begin{definition}[\textbf{Fuzzy Boundary}]
Given a query graph $\mathcal{Q}$ and a candidate graph $\mathcal{G}=(V,E)$, a subgraph $\mathcal{G}^*=(V^*, E^*)$ of $\mathcal{G}$, where $V^* \subseteq V$ and $E^* = E \cap (V^* \times V^*)$, is a \textbf{fuzzy boundary} for $\mathcal{Q}$ in $\mathcal{G}$ if:
\begin{enumerate}
    \item $\mathcal{Q} \preccurlyeq \mathcal{G}^*$ 
    \item For any proper subgraph $\mathcal{H}$ of $\mathcal{G}^*$ (i.e., $\mathcal{H} \subset \mathcal{G}^*$ and $\mathcal{H} \neq \mathcal{G}^*$), $\mathcal{Q} \not\preccurlyeq \mathcal{H}$ 
\end{enumerate}
\label{def:boundary}
\end{definition}

As illustrated in Figure~\ref{fig:toy_equation}, for $\mathcal{G}$ representing $a \times b + a \times c$ and $\mathcal{Q}$ representing $b+c$, the fuzzy boundary $\mathcal{G}^*$ would likely encompass the components corresponding to $b$, $c$, the two multiplications, and the addition, as this minimal collection is required to functionally contain $b+c$ via transformation. Based on Definition~\ref{def:boundary}, we further define another task as:


\paragraph{Task \#2: Fuzzy Boundary Identification.} Given a query graph $\mathcal{Q}$ and a candidate graph $\mathcal{G}$ such that $\mathcal{Q} \preccurlyeq \mathcal{G}$, determine for each node in $\mathcal{G}$, whether it belongs to the fuzzy boundary $\mathcal{G}^*$ of $\mathcal{Q}$.

\section{Method}
\label{sec:method}

\subsection{Stage \#1: Functional Subgraph Detection}
\begin{figure}[h]
    \vspace{-5pt}
    \centering
    \includegraphics[width=1.0\linewidth]{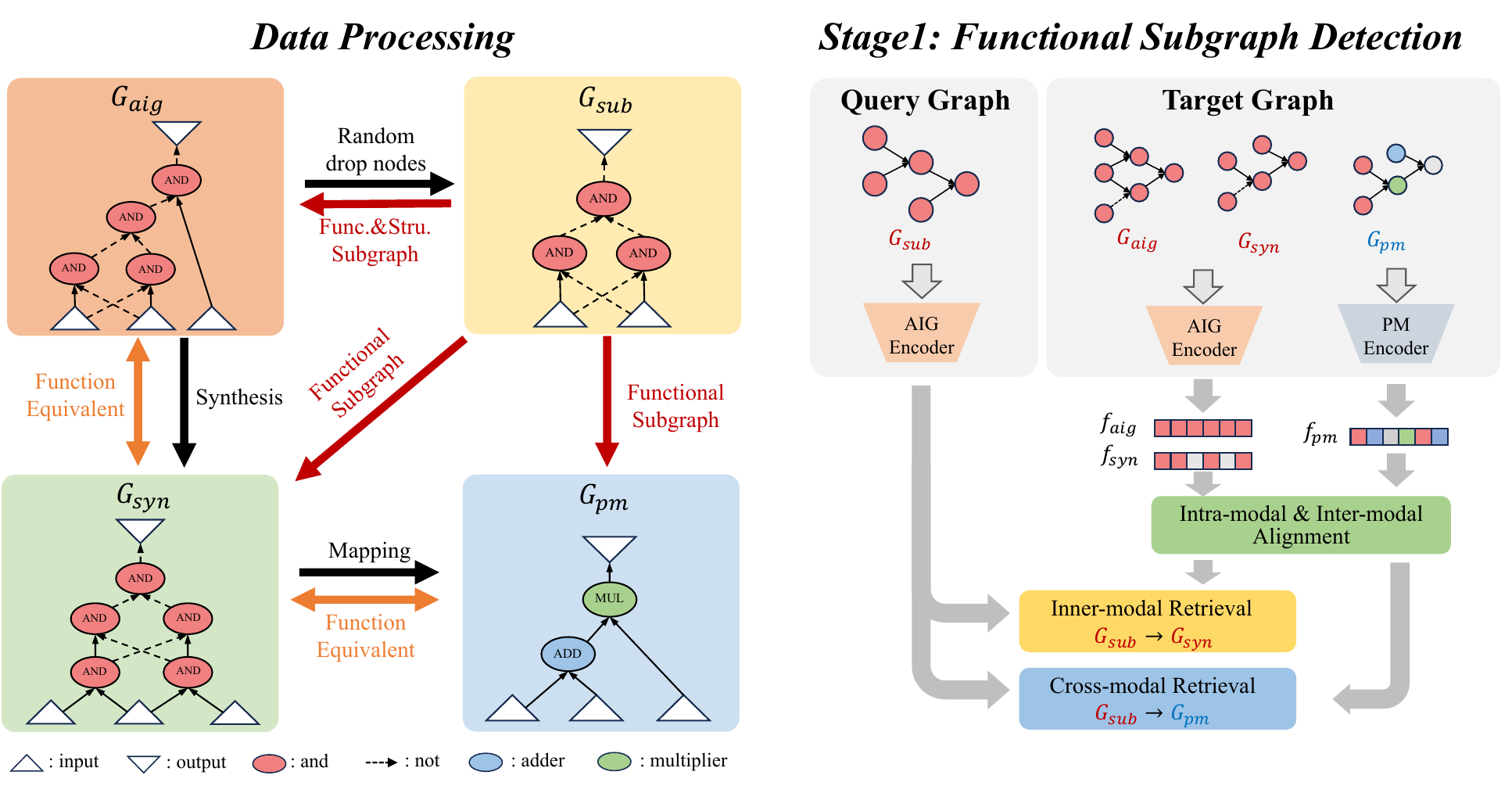}
    \caption{The pipeline of Stage \#1. \textbf{Left}: Our data processing pipeline. For a given $\mathcal{G}_{aig}$, we first randomly extract a subgraph $\mathcal{G}_{sub}$. Then, we obtain $\mathcal{G}_{syn}$ and $\mathcal{G}_{pm}$ through synthesis and mapping, respectively. \textbf{Right}: Our training pipeline via intra-modal and inter-modal alignments for functional subgraph detection. We first encode the query and target graphs using their respective encoders. Next, we perform intra-modal and inter-modal alignment on the target graph to obtain function-invariant and structure-agnostic embeddings. These embeddings are then sent to a task head to determine whether the query graph is contained within the target graph.}
    \label{fig:stage1}
    \vspace{-10pt}
\end{figure}

\paragraph{Data Processing} As illustrated in Figure~\ref{fig:stage1}, given an AIG netlist $\mathcal{G}_{aig}$, we first randomly drop nodes while ensuring legality, to obtain the subgraph $\mathcal{G}_{sub}$. Next, we use the ABC tool~\cite{brayton2010abc} to generate $\mathcal{G}_{syn}$ by randomly selecting a synthesis flow. Importantly, in this step we ensure that $\mathcal{G}_{syn}$ is not isomorphic to $\mathcal{G}_{aig}$. Finally, we apply the ABC tool again to map $\mathcal{G}_{syn}$ to $\mathcal{G}_{pm}$ using the Skywater Open Source PDK~\cite{sky130nm}. This data processing pipeline ensures that $\mathcal{G}_{aig}$ is equivalent to both $\mathcal{G}_{syn}$ and $\mathcal{G}_{pm}$. Since $\mathcal{G}_{sub}$ is an isomorphic subgraph of $\mathcal{G}_{aig}$, it follows from Definition~\ref{def:subgraph_func} that $\mathcal{G}_{sub}$ is a functional subgraph of both $\mathcal{G}_{syn}$ and $\mathcal{G}_{pm}$. For negative pairs, following the approach in \citet{li2024efficient}, we randomly sample $\mathcal{G}_{aig}$, $\mathcal{G}_{syn}$, and $\mathcal{G}_{pm}$ from other pairs within the same batch. It is important to note that all circuits in this paper have multiple inputs and a single output.
For more details, please refer to Section~\ref{sec:data} and Appendix~\ref{apdx:dataset}.

\begin{figure}[]
    \centering
    \includegraphics[width=1.0\linewidth]{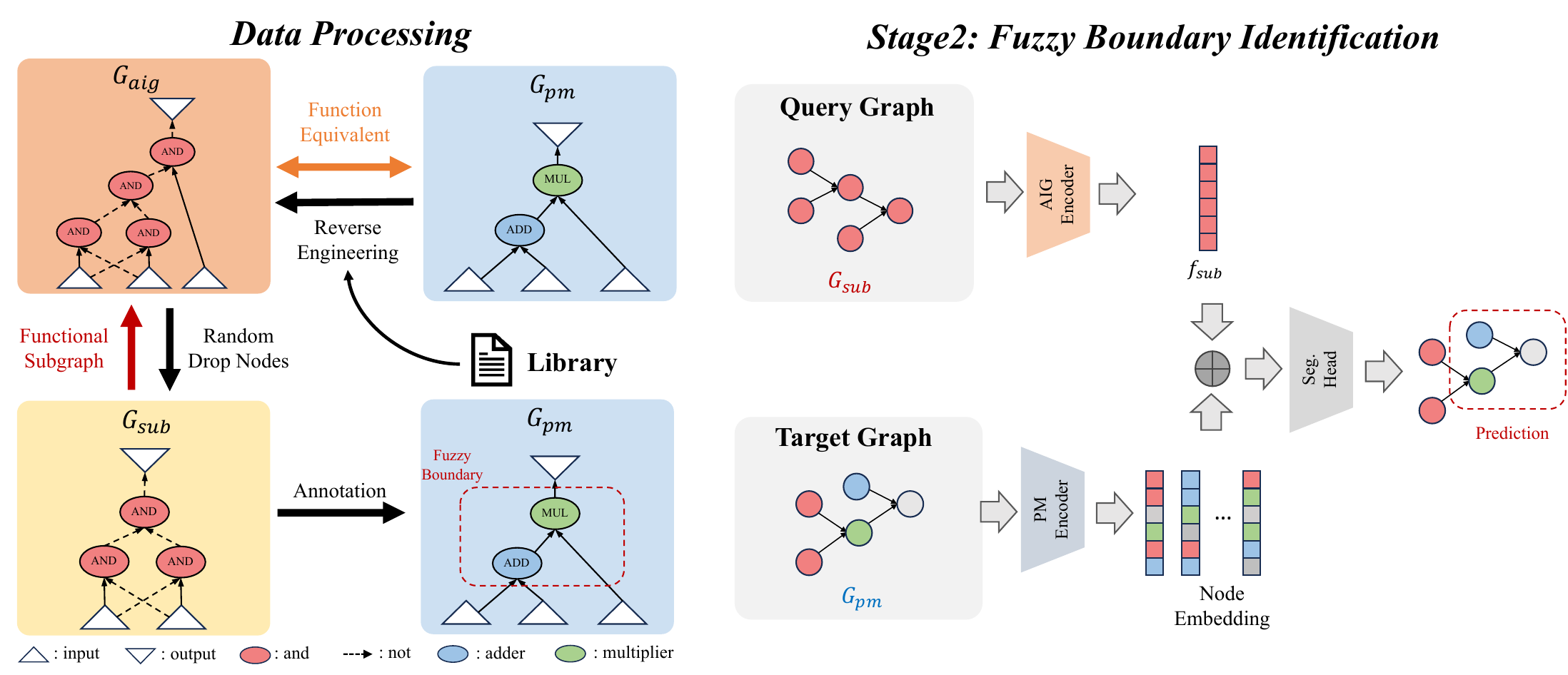}
    \caption{The pipeline of Stage \#2. \textbf{Left}: Our data processing pipeline. For a given $\mathcal{G}_{pm}$, we replace each node in $\mathcal{G}_{pm}$ with the AIG implementation according to the functionality in the library. Then, we randomly sample a subgraph $\mathcal{G}_{sub}$ from $\mathcal{G}_{aig}$. Finally, we annotate the nodes in $\mathcal{G}_{pm}$ if one of the corresponding AIG nodes still exist in $\mathcal{G}_{sub}$. \textbf{Right}: Our training pipeline for fuzzy boundary identification via graph segmentation. Given the query graph $\mathcal{G}_{sub}$ and the target graph $\mathcal{G}_{pm}$, we first use $Enc_{aig}$ to obtain the graph embedding of $\mathcal{G}_{sub}$ and $Enc_{pm}$ to obtain the node embeddings of $\mathcal{G}_{pm}$. These embeddings are then concatenated and passed to a task head to determine whether a node in $\mathcal{G}_{pm}$ lies within the fuzzy boundary of $\mathcal{G}_{sub}$.}
    \label{fig:stage2}
    \vspace{-10pt}
\end{figure}
\paragraph{Retrieval} In this paper, we adopt DeepGate2~\cite{shi2023deepgate2} and DeepCell~\cite{shi2025deepcell} as backbones for encoding AIG netlists and post-mapping netlists, respectively. Given a query graph $\mathcal{G}_{sub}$, along with positive candidates $\mathcal{G}_{aig}^{+}, \mathcal{G}_{syn}^{+}, \mathcal{G}_{pm}^{+}$ and negative candidates $\mathcal{G}_{aig}^{-}, \mathcal{G}_{syn}^{-}, \mathcal{G}_{pm}^{-}$, we first use the AIG encoder $Enc_{aig}$ and the PM encoder $Enc_{pm}$ for different modalities as follows:
\begin{align*}
    f_{sub} &= Enc_{aig}(\mathcal{G}_{sub}),\ 
    f_{aig} = Enc_{aig}(\mathcal{G}_{aig}) \\
    f_{syn} &= Enc_{aig}(\mathcal{G}_{syn}),\ 
    f_{pm} = Enc_{pm}(\mathcal{G}_{pm})
\end{align*}
Next, we concatenate the embeddings of the query graph and the candidate graphs and feed them into a classification head, a 3-layer MLP:
\begin{align*}
    \hat{y}_{aig} &= MLP([f_{sub}, f_{aig}]),\ 
    \hat{y}_{syn} = MLP([f_{sub}, f_{syn}]),\ 
    \hat{y}_{pm} = MLP([f_{sub}, f_{pm}])
\end{align*}
Finally, we compute the binary cross-entropy (BCE) loss for each prediction:
\begin{align*}
    L_{cls} = BCELoss(\hat{y}_{aig}, y_{aig}) + BCELoss(\hat{y}_{syn}, y_{syn}) + BCELoss(\hat{y}_{pm}, y_{pm})
\end{align*}

\paragraph{Function-Invariant Alignment} EDA flows such as synthesis and mapping modify the circuit structure while preserving functional equivalence. As defined in Definition~\ref{def:subgraph_func}, the functional subgraph relation focuses on the functionality of the candidate circuits rather than structure, as they can be transformed into an equivalent circuit with any structure. Furthermore, the \textit{Functional Equivalence Preservation} property in Section~\ref{sec:property} imply that, if the subgraph relation $Q\preccurlyeq G$ hold, then if we replace $Q$ or $G$ with another functional equivalent graph, the subgraph relation continues to hold. This invariance is the key insight for our alignment: the embeddings of functionally equivalent graphs should be aligned, regardless of their structural variations.

Therefore, learning function-invariant embeddings for equivalent circuits across different stages is crucial for functional subgraph detection. 
While $\mathcal{G}_{aig}$ and $\mathcal{G}_{syn}$ share the same gate types, $\mathcal{G}_{aig}$ and $\mathcal{G}_{pm}$ differ significantly in modality, \ie, the gate types in $\mathcal{G}_{pm}$ are substantially dissimilar to those in $\mathcal{G}_{aig}$.
Therefore, we employ both intra-modal and inter-modal alignment techniques to acquire function-invariant and structure-agnostic embeddings with the InfoNCE loss~\cite{oord2018representation}. We select $\mathcal{G}_{aig}$ as the anchor and compute the intra-modal and inter-modal losses as follows:
\begin{align*}
    L_{intra} &= InfoNCE(f_{aig}^{+}, f_{syn}^{+}, f_{syn}^{-}) \\
    L_{inter} &= InfoNCE(f_{aig}^{+}, f_{pm}^{+}, f_{pm}^{-})
\end{align*}
Finally, we summarize the losses for stage \#1 as:
\begin{align*}
    L_{stage_1} = L_{cls} + L_{intra} + L_{inter}
\end{align*}

\subsection{Stage \#2: Fuzzy Boundary Identification}

\paragraph{Data Processing} As illustrated in Figure~\ref{fig:stage2}, given a post-mapping netlist $\mathcal{G}_{pm}$, we replace the cells in $\mathcal{G}_{pm}$ with the corresponding AIGs from the library to acquire the netlist $\mathcal{G}_{aig}$. This process yields a mapping function $\phi$ that associates the node indices of $\mathcal{G}_{aig}$ with those of $\mathcal{G}_{pm}$. Next, we randomly drop nodes to obtain $\mathcal{G}_{sub}$, which serves as the functional subgraph of both $\mathcal{G}_{pm}$ and $\mathcal{G}_{aig}$. Using the subgraph $\mathcal{G}_{sub}$, we annotate the nodes in $\mathcal{G}_{pm}$ by mapping the node indices of $\mathcal{G}_{sub}$ to those of $\mathcal{G}_{pm}$ through the function $\phi$. Specifically, for each node in $\mathcal{G}_{sub}$, if it maps to a node $i$ in $\mathcal{G}_{pm}$, we annotate node $i$ as 1; otherwise, we annotate it as 0. This annotation process strictly follows the fuzzy boundary definition in Definition~\ref{def:boundary}. 

\vspace{-5pt}
\paragraph{Cross-modal Retrieval} Given a query graph $\mathcal{G}_{sub}$ and a target graph $\mathcal{G}_{pm} = (V_{pm}, E_{pm})$, we first compute the embedding of $\mathcal{G}_{sub}$ and the node embeddings of $\mathcal{G}_{pm}$:
\begin{align*}
    f_{sub} = Enc_{aig}(\mathcal{G}_{sub}), \ 
    f_{pm}^{1}, f_{pm}^{2}, \dots, f_{pm}^{|V_{pm}|} = Enc_{pm}(\mathcal{G}_{pm})
\end{align*}
Next, we use $f_{sub}$ as the query embedding and concatenate it with the node embeddings from $\mathcal{G}_{pm}$. These concatenated embeddings are then fed into a 3-layer MLP for node classification: $\hat{y}_i = MLP([f_{sub}, f_{pm}^{i}])$.
While previous works~\cite{wang2022efficient, he2021graph} treat this task as an input-output classification problem, we frame it as a graph segmentation task. This approach arises from the observation that nodes near the input-output nodes contribute to identifying fuzzy boundaries and thus should not be simply labeled as zero. During training, we optimize the model using cross-entropy loss:
\begin{equation}
L_{stage_2} = - \sum_{i} \left[ y_i \log(\hat{y}_i) + (1 - y_i) \log(1 - \hat{y}_i) \right]
\end{equation}

\section{Experiment}

\subsection{Experimental Setup}
\label{sec:data}

\begin{table}[b]
\vspace{-15pt}
\caption{Result of Functional Subgraph Detection(\%).}
\footnotesize
\setlength{\tabcolsep}{2pt}
\begin{tabular}{@{}cccccccccc@{}}
\toprule
\multirow{2}{*}{Dataset} & \multirow{2}{*}{Method} & \multicolumn{4}{c}{$\mathcal{G}_{sub}\to\mathcal{G}_{syn}$} & \multicolumn{4}{c}{$\mathcal{G}_{sub} \to \mathcal{G}_{pm}$} \\ \cmidrule(l){3-10} 
 &  & Accuracy & Precision & Recall & F1-score & Accuracy & Precision & Recall & F1-score \\ \midrule
\multirow{5}{*}{$\rotatebox{90}{ITC99}$} & NeuroMatch & $49.8_{\pm 0.3}$ & $16.7_{\pm 23.6}$ & $33.3_{\pm 47.1}$ & $22.2_{\pm 31.4}$ & $49.8_{\pm 0.2}$ & $16.7_{\pm 23.6}$ & $50.0_{\pm 50.0}$ & $33.4_{\pm 33.4}$ \\
 & HGCN & $44.5_{\pm 7.7}$ & $35.0_{\pm 21.2}$ & $67.3_{\pm 46.3}$ & $45.3_{\pm 30.2}$ & $49.5_{\pm 0.8}$ & $35.7_{\pm 20.2}$ & $66.8_{\pm 47.0}$ & $44.7_{\pm 31.2}$ \\
 & Gamora & $50.6_{\pm 12.8}$ & $21.1_{\pm 27.7}$ & $33.0_{\pm 46.0}$ & $25.4_{\pm 34.8}$ & $51.7_{\pm 4.4}$ & $34.0_{\pm 24.2}$ & $51.2_{\pm 40.9}$ & $40.2_{\pm 29.6}$ \\
 & ABGNN & $56.4_{\pm 9.1}$ & $20.8_{\pm 29.4}$ & $32.7_{\pm 46.3}$ & $25.4_{\pm 35.9}$ & $54.1_{\pm 5.8}$ & $19.0_{\pm 26.9}$ & $33.3_{\pm 47.1}$ & $24.2_{\pm 34.2}$ \\ \cmidrule(l){2-10} 
 & Ours & $\mathbf{95.3_{\pm 0.1}}$ & $\mathbf{94.4_{\pm 0.2}}$ & $\mathbf{96.3_{\pm 0.1}}$ & $\mathbf{95.4_{\pm 0.0}}$ & $\mathbf{93.1_{\pm 0.3}}$ & $\mathbf{92.3_{\pm 0.3}}$ & $\mathbf{94.2_{\pm 0.9}}$ & $\mathbf{93.2_{\pm 0.4}}$ \\ \midrule
\multirow{5}{*}{$\rotatebox{90}{OpenABCD}$} & NeuroMatch & $44.2_{\pm 9.8}$ & $17.3_{\pm 23.9}$ & $33.4_{\pm 47.1}$ & $22.7_{\pm 31.8}$ & $44.9_{\pm 8.4}$ & $17.0_{\pm 23.9}$ & $33.4_{\pm 47.1}$ & $22.5_{\pm 31.7}$ \\
 & HGCN & $52.5_{\pm 3.6}$ & $18.0_{\pm 25.5}$ & $32.5_{\pm 46.0}$ & $23.2_{\pm 32.8}$ & $50.0_{\pm 0.0}$ & $20.4_{\pm 21.4}$ & $33.0_{\pm 46.7}$ & $22.2_{\pm 31.3}$ \\
 & Gamora & $50.8_{\pm 1.1}$ & $33.7_{\pm 23.9}$ & $66.6_{\pm 47.1}$ & $44.8_{\pm 31.7}$ & $49.8_{\pm 0.3}$ & $33.2_{\pm 23.5}$ & $62.1_{\pm 44.3}$ & $43.3_{\pm 30.6}$ \\
 & ABGNN & $34.1_{\pm 5.4}$ & $5.2_{\pm 3.9}$ & $2.6_{\pm 2.6}$ & $3.4_{\pm 3.2}$ & $41.3_{\pm 4.0}$ & $9.7_{\pm 7.6}$ & $3.5_{\pm 3.2}$ & $5.1_{\pm 4.5}$ \\ \cmidrule(l){2-10} 
 & Ours & $\mathbf{92.3_{\pm 0.2}}$ & $\mathbf{93.7_{\pm 0.2}}$ & $\mathbf{90.6_{\pm 0.4}}$ & $\mathbf{92.1_{\pm 0.2}}$ & $\mathbf{90.8_{\pm 0.4}}$ & $\mathbf{92.4_{\pm 0.4}}$ & $\mathbf{88.9_{\pm 0.9}}$ & $\mathbf{90.6_{\pm 0.5}}$ \\ \midrule
\multirow{5}{*}{$\rotatebox{90}{ForgeEDA}$} & NeuroMatch & $50.0_{\pm 0.0}$ & $16.7_{\pm 23.6}$ & $33.3_{\pm 47.1}$ & $22.2_{\pm 31.4}$ & $50.0_{\pm 0.0}$ & $16.7_{\pm 23.6}$ & $33.3_{\pm 47.1}$ & $22.2_{\pm 31.4}$ \\
 & HGCN & $44.0_{\pm 8.5}$ & $18.2_{\pm 22.6}$ & $33.9_{\pm 46.7}$ & $23.1_{\pm 30.8}$ & $48.8_{\pm 1.6}$ & $18.8_{\pm 22.2}$ & $33.5_{\pm 47.0}$ & $22.5_{\pm 31.2}$ \\
 & Gamora & $40.6_{\pm 6.3}$ & $2.4_{\pm 1.6}$ & $0.7_{\pm 0.8}$ & $1.0_{\pm 1.1}$ & $48.2_{\pm 1.5}$ & $51.0_{\pm 8.2}$ & $31.0_{\pm 31.6}$ & $28.5_{\pm 22.9}$ \\
 & ABGNN & $52.3_{\pm 3.3}$ & $34.6_{\pm 24.5}$ & $66.6_{\pm 47.1}$ & $45.5_{\pm 32.2}$ & $52.0_{\pm 2.9}$ & $34.4_{\pm 24.4}$ & $66.6_{\pm 47.1}$ & $45.4_{\pm 32.1}$ \\ \cmidrule(l){2-10} 
 & Ours & $\mathbf{96.0_{\pm 0.1}}$ & $\mathbf{96.8_{\pm 0.4}}$ & $\mathbf{95.2_{\pm 0.5}}$ & $\mathbf{96.0_{\pm 0.1}}$ & $\mathbf{95.3_{\pm 0.0}}$ & $\mathbf{95.9_{\pm 0.5}}$ & $\mathbf{94.7_{\pm 0.5}}$ & $\mathbf{95.3_{\pm 0.0}}$ \\ \bottomrule
\end{tabular}
\label{tab:stage1}
\end{table}

We evaluate our method on three AIG datasets: ITC99~\cite{ITC99}, OpenABCD~\cite{chowdhury2021openabc}, and ForgeEDA~\cite{shi2025forgeeda}. 
Each metric in Tables~\ref{tab:stage1} and ~\ref{tab:stage2} is reported as the \textit{mean $\pm$ standard deviation} over three independent runs.
For data processing, we begin by randomly sampling $k$-hop subgraphs (with $k$ ranging from 8 to 12) to partition large circuits into smaller circuits. 
Next, we randomly sample subgraphs from these smaller circuits.
For logic synthesis, we use the ABC tool~\cite{brayton2010abc} with a randomly selected flow from \textit{src\_rw}, \textit{src\_rs}, \textit{src\_rws}, \textit{resyn2rs}, and \textit{compress2rs}. We then apply the VF2 algorithm~\cite{cordella2004sub} to verify that the synthesis process has modified the circuit structure. If no modification is detected, we repeat this step until we obtain a circuit with a different structure. For technology mapping, we invoke ABC with the Skywater Open Source PDK~\cite{sky130nm}. For additional details on the environment, evaluation metrics, and dataset statistics, please refer to Appendix~\ref{apdx:dataset}.

\subsection{Stage \#1: Functional Subgraph Detection}

We evaluate the performance of our proposed method on three datasets: ITC99, OpenABCD, and ForgeEDA. Our method is compared against several state-of-the-art models, including NeuroMatch~\cite{lou2020neural} and HGCN~\cite{li2024efficient}, which are designed for isomorphism subgraph matching in general domain and EDA domain respectively, and Gamora~\cite{wu2023gamora} and ABGNN~\cite{wang2022efficient}, which are designed for reasoning in EDA domain, \ie for equivalent subgraph matching. Since Gamora and ABGNN focus on boundary detection instead of subgraph matching, we integrate them into the NeuroMatch framework for Stage \#1. Further integration of Gamora and ABGNN with our method is discussed in Appendix~\ref{apdx:other_baseline}. The evaluation metrics include accuracy, precision, recall, and F1-score, computed for two tasks: $\mathcal{G}_{sub}$ to $\mathcal{G}_{syn}$ and $\mathcal{G}_{sub}$ to $\mathcal{G}_{pm}$.

As shown in Table~\ref{tab:stage1}, the results on the ITC99, OpenABCD, and ForgeEDA datasets demonstrate that our method significantly outperforms all baseline models. Specifically, for the $\mathcal{G}_{sub} \to \mathcal{G}_{syn}$ task, our model achieves an average accuracy of $\mathbf{94.5\%}$, precision of $\mathbf{95.0\%}$, recall of $\mathbf{94.0\%}$, and F1-score of $\mathbf{94.5\%}$, surpassing all other methods by a large margin. Similarly, for the $\mathcal{G}_{sub} \to \mathcal{G}_{pm}$ task, our method also shows superior performance with an accuracy of $\mathbf{93.1\%}$, precision of $\mathbf{93.5\%}$, recall of $\mathbf{92.6\%}$, and F1-score of $\mathbf{93.0\%}$. In contrast, structure-based methods show an accuracy close to 50\% and large standard errors in precision, recall, and F1-score. Such unreliable performance typically arises because these methods indiscriminately predict all pairs as either entirely positive or negative, highlighting their limitations in functional subgraph detection.

\vspace{-5pt}
\subsection{Stage \#2: Fuzzy Boundary Identification}

\begin{table}[h]
\centering
\small
\vspace{-10pt}
\caption{Result of Fuzzy Boundary Identification(\%).}
\begin{tabular}{@{}ccccccc@{}}
\toprule
\multirow{2}{*}{Method} & \multicolumn{2}{c}{ITC99} & \multicolumn{2}{c}{OpenABCD} & \multicolumn{2}{c}{ForgeEDA} \\ \cmidrule(l){2-7} 
 & IoU & DICE & IoU & DICE & IoU & DICE \\ \midrule
NeuroMatch & $44.2_{\pm 0.0}$ & $61.3_{\pm 0.0}$ & $41.2_{\pm 0.0}$ & $58.3_{\pm 0.0}$ & $42.0_{\pm 0.0}$ & $59.1_{\pm 0.0}$ \\
HGCN & $44.1_{\pm 0.0}$ & $61.2_{\pm 0.0}$ & $41.2_{\pm 0.0}$ & $58.3_{\pm 0.0}$ & $42.0_{\pm 0.0}$ & $59.2_{\pm 0.0}$ \\
Gamora & $39.1_{\pm 2.8}$ & $56.2_{\pm 2.9}$ & $44.2_{\pm 1.2}$ & $61.3_{\pm 1.1}$ & $39.5_{\pm 0.6}$ & $56.6_{\pm 0.6}$ \\
ABGNN & $26.7_{\pm 6.2}$ & $41.7_{\pm 7.5}$ & $37.5_{\pm 0.8}$ & $54.5_{\pm 0.8}$ & $31.9_{\pm 2.6}$ & $48.2_{\pm 3.0}$ \\ \midrule
Ours & $\mathbf{83.0_{\pm   1.4}}$ & $\mathbf{90.7_{\pm 0.9}}$ & $\mathbf{85.2_{\pm 0.9}}$ & $\mathbf{92.0_{\pm 0.5}}$ & $\mathbf{83.8_{\pm 0.8}}$ & $\mathbf{91.2_{\pm 0.4}}$ \\ \bottomrule
\end{tabular}
\label{tab:stage2}
\end{table}

In this stage, we treat $\mathcal{G}_{sub}$ as the query and aim to locate its fuzzy boundary within the post-mapping netlist $\mathcal{G}_{pm}$. Since Gamora and ABGNN are designed for the detection of the input-output boundary, we first apply each to identify the input and output nodes in $\mathcal{G}_{pm}$. We then perform a BFS between inputs and outputs to recover the corresponding fuzzy boundary, and evaluate the result using Intersection-over-Union (IoU) and DICE score.  

Table~\ref{tab:stage2} reports results on ITC99, OpenABCD, and ForgeEDA, demonstrating that our model substantially outperforms all baselines. Specifically, we achieve an average IoU of $\mathbf{84.0\%}$ and a Dice score of $\mathbf{91.3\%}$, significantly outperforming all other methods. Structure-based methods (e.g., NeuroMatch and HGCN) fail to capture functional boundaries and often generate trivial solutions (predicting all nodes as boundary nodes), yielding low variance but poor performance. Although Gamora and ABGNN can detect clear block boundaries for specific arithmetic modules, they struggle with the variable, function-driven fuzzy boundaries required here, resulting in significantly lower performance. Further integration of Gamora and ABGNN within our framework is detailed in Appendix~\ref{apdx:other_baseline}.

\subsection{Ablation Study}

We perform ablation study on ITC99 dataset and compare the performance of the ablation settings with our proposed method to evaluate the contribution of various components in our method. 
\begin{table}[h]
\centering
\small
\setlength{\tabcolsep}{3pt}
\vspace{-10pt}
\caption{Ablation Study on ITC99 Dataset($\%$).}

\scalebox{0.9}{
\begin{tabular}{@{}lcccccc@{}}
\toprule
\multicolumn{1}{c}{\multirow{3}{*}{Setting}} & \multicolumn{4}{c}{\textbf{Stage \#1}} & \multicolumn{2}{c}{\textbf{Stage \#2}} \\ \cmidrule(l){2-7} 
\multicolumn{1}{c}{} & \multicolumn{2}{c}{$\mathcal{G}_{sub} \to \mathcal{G}_{syn}$} & \multicolumn{2}{c}{$\mathcal{G}_{sub} \to \mathcal{G}_{pm}$} & \multicolumn{2}{c}{$\mathcal{G}_{sub} \to \mathcal{G}_{pm}$} \\ \cmidrule(l){2-7} 
\multicolumn{1}{c}{} & Accuracy & F1-score & Accuracy & F1-score & IoU & DICE \\ \midrule
Stage \#1\textit{ wo.} alignment & 94.6 & 94.6 & 91.4 & 91.5 & - & - \\ \midrule
Stage \#2 \textit{wo.} stage \#1 & - & - & - & - & 76.3 & 86.5 \\
Stage  \#2 \textit{wo.} seg. & - & - & - & - & 29.6 & 45.7 \\ \midrule
Ours & \textbf{95.3} & \textbf{95.4} & \textbf{93.1} & \textbf{93.2} & \textbf{83.0} & \textbf{90.7} \\ \bottomrule
\end{tabular}
}
\vspace{-10pt}
\end{table}

\textbf{Stage \#1 without alignment} achieves accuracy and F1-scores of $94.6\%$ and $94.6\%$ on $\mathcal{G}_{sub} \to \mathcal{G}_{syn}$ task, which are lower than our method’s $95.3\%$ and $95.4\%$. Our model also improves accuracy and F1-score by $1.7\%$ on $\mathcal{G}_{sub} \to \mathcal{G}_{pm}$ task. These results demonstrate the importance of function-invariant alignment, particularly inter-modal alignment, \ie aligning $\mathcal{G}_{pm}$ and $\mathcal{G}_{aig}$.

\textbf{Stage \#2 without Stage \#1} shows a performance drop, with IoU and DICE scores of $76.3\%$ and $86.5\%$, compared to our method’s improved values of $83.0\%$ and $90.7\%$. This highlights the crucial role of pretraining knowledge in Stage \#1.

\textbf{Stage \#2 without segmentation} also shows a significant drop in performance, with IoU and DICE values of $29.6\%$ and $45.7\%$, compared to our method’s improved $83.0\%$ and $90.7\%$. These results suggest that directly predicting the input-output nodes of the fuzzy boundary is challenging, as it varies with different functional transformations and omits the information of nodes in fuzzy boundary.

\section{Limitations}
\label{sec:limitation}
While our proposed framework demonstrates strong performance and significant improvements over existing structural approaches, several limitations remain and should be addressed in future research:

\textbf{Scalability to Large-scale Circuits:}
Currently, our method has primarily been evaluated on moderately-sized circuits due to computational resource constraints. Real-world EDA applications often involve extremely large netlists with millions of nodes. Scaling our detection and segmentation approaches to handle such large-scale graphs efficiently is non-trivial. Future research could investigate more computationally efficient embedding methods, hierarchical segmentation approaches, or incremental graph processing techniques to enhance scalability.

\textbf{Multiple and Overlapping Fuzzy Boundaries:}
Our fuzzy boundary identification method presently assumes a single, minimal enclosing region within the target graph. In practical scenarios, multiple occurrences or overlapping functional subgraphs might exist within a single large circuit, complicating boundary identification tasks. Extending our methodology to effectively handle multiple or overlapping fuzzy boundaries within the same circuit remains an open and challenging direction for further investigation.
 
\textbf{Single-output Circuit Assumption:}
The current approach assumes single-output logic circuits. In real-world scenarios, however, most circuits possess multiple outputs and complex internal functional dependencies. The direct applicability of our method to multi-output circuits, particularly when outputs share significant internal logic, remains unexplored. Generalizing the definitions and embedding strategies to model multi-output scenarios could further enhance practical relevance.

\textbf{Non-trivial Function Assumption:}
In this paper, we assume that a graph obtained by removing some nodes and edges is not functionally equivalent to the original graph, \ie $\forall g \neq \emptyset$, $G \setminus g \not\equiv_{\mathrm{func}} G$. While EDA tools inherently enforce this constraint, it may limit the generalizability of the functional subgraph in other domains.

By systematically addressing these limitations, subsequent research can extend our approach to broader, more realistic settings, thereby increasing its practical utility in EDA domain and beyond.

\section{Conclusion}
\label{sec:conclusion}
In this paper, we introduce the concept of \emph{functional subgraph matching}, a method to identify implicit logic functions within larger circuits, despite structural variations. We propose a two-stage framework: first, we train models across different modalities with alignment to detect functional subgraphs; second, we fine-tune our model and treat fuzzy boundary identification as a graph segmentation task for precise localization of fuzzy boundary. Evaluations on benchmarks (ITC99, OpenABCD, ForgeEDA) show that our approach outperforms structure-based methods, achieving $93.8\%$ accuracy in functional subgraph detection and a $91.3\%$ DICE score for fuzzy boundary detection.
\paragraph{Broader Impact} Our method contributes to the advancement of deep learning, particularly in graph-based functional relationship analysis. By improving the detection of functional relationships in complex systems, it has the potential to impact a wide range of applications, from circuit design to other domains that rely on graph functionality, \eg molecular and protein graphs.
\paragraph{Acknowledgment}
This work was supported in part by the Hong Kong Research Grants Council (RGC) under Grant No. 14212422, 14202824, and C6003-24Y.

\bibliographystyle{unsrtnat}
\bibliography{Subgraph}


\appendix
\section{Proofs of the Proposed Properties}
\label{apdx:property}


In this section, we use $G_1\cong G_2$ to denote that $G_1$ is isomorphic to $G_2$. Also, we use $G_1\equiv_{\mathrm{func}}G_2$ to denote that $G_1$ is functional equivalent to $G_2$.


\begin{proposition}
If a graph $\mathcal{Q}$ is an equivalent subgraph of $\mathcal{G}$, then $\mathcal{Q}$ is a functional subgraph of $\mathcal{G}$.
\end{proposition}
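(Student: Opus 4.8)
The plan is to unfold both definitions and reduce the claim to constructing a single witness graph. By Definition~\ref{def:subgraph_eq}, the hypothesis that $\mathcal{Q}$ is an equivalent subgraph of $\mathcal{G}$ supplies a subgraph $\mathcal{G}_0$ of $\mathcal{G}$ with $\mathcal{Q} \equiv_{func} \mathcal{G}_0$. To establish $\mathcal{Q} \preccurlyeq \mathcal{G}$ through Definition~\ref{def:subgraph_func}, I must exhibit a graph $\mathcal{G}'$ satisfying $\mathcal{G}' \equiv_{func} \mathcal{G}$ together with an isomorphic embedding of $\mathcal{Q}$ into a subgraph of $\mathcal{G}'$. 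The tempting shortcut $\mathcal{G}' = \mathcal{G}$ fails: reflexivity of $\equiv_{func}$ handles the first condition, but $\mathcal{Q}$ is only guaranteed functionally equivalent to $\mathcal{G}_0$, not isomorphic to any subgraph of $\mathcal{G}$, so the second condition need not hold. This is precisely why a functional transformation of the target is needed.

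First I would construct $\mathcal{G}'$ by substitution: excise the subcircuit $\mathcal{G}_0$ from $\mathcal{G}$ and splice $\mathcal{Q}$ into the resulting hole, identifying the inputs of $\mathcal{Q}$ with the signals that fed $\mathcal{G}_0$ and routing the single output of $\mathcal{Q}$ to every gate that the output of $\mathcal{G}_0$ drove. Since all circuits considered are single-output with a fixed input correspondence, this interface is well defined, and $\mathcal{Q}$ then appears verbatim as a subgraph of the constructed $\mathcal{G}'$, yielding the required isomorphic embedding immediately.

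Next I would verify $\mathcal{G}' \equiv_{func} \mathcal{G}$. The key observation is that $\mathcal{Q}$ and $\mathcal{G}_0$ realize the same Boolean function of their inputs; hence for every primary-input assignment the value produced at the spliced output equals the value $\mathcal{G}_0$ would have produced, so every downstream gate—and in particular the primary output—computes an identical signal. This gives functional equivalence of the two whole circuits. Combining the two conditions, $\mathcal{G}'$ witnesses $\mathcal{Q} \preccurlyeq \mathcal{G}$, which is exactly the conclusion.

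The main obstacle is making the splicing operation rigorous rather than merely intuitive: one must argue that replacing a functionally equivalent single-output subcircuit leaves the global truth table invariant, which relies on the input/output interface of $\mathcal{G}_0$ being matched correctly by $\mathcal{Q}$ and on the non-triviality assumption $\forall g \neq \emptyset,\ G \setminus g \not\equiv_{func} G$ ensuring that the subgraph boundary is well behaved. Once this substitution step is in place the proposition follows at once, and the same construction makes transparent why Definition~\ref{def:subgraph_eq} is a strict special case of Definition~\ref{def:subgraph_func}.
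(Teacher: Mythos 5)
Your proposal matches the paper's proof: both construct the witness $\bar{\mathcal{G}} = (\mathcal{G}\setminus\mathcal{G}_0)\cup\mathcal{Q}$ by splicing $\mathcal{Q}$ in place of the functionally equivalent subgraph $\mathcal{G}_0$, observe that this preserves functional equivalence with $\mathcal{G}$, and note that $\mathcal{Q}$ now appears verbatim as a subgraph. Your write-up is in fact more explicit than the paper's about why the substitution leaves the truth table invariant, but the underlying argument is the same.
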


\begin{proof}
    According to the Definition~\ref{def:subgraph_eq}, there exists a subgraph $\mathcal{G}'$ of $\mathcal{G}$ such that $\mathcal{Q} \equiv_{func} \mathcal{G}'$. By replacing $\mathcal{G}'$ with $\mathcal{Q}$, we get $\bar{\mathcal{G}} = \mathcal{G}\setminus\mathcal{G}' \cup \mathcal{Q}$ which is equivalent to $\mathcal{G}$ and a subgraph of $\mathcal{G}$ is isomorphic to $\mathcal{Q}$. By the Definition~\ref{def:subgraph_func}, $\mathcal{Q}$ is a functional subgraph of $\mathcal{G}$.
\end{proof}

\begin{proposition}[Reflexivity]
$\forall \mathcal{G}, \mathcal{G}\preccurlyeq\mathcal{G}.$
\end{proposition}
\begin{proof}
    $\mathcal{G}$ is a subgraph of itself, and $\mathcal{G} \equiv_{\mathrm{func}} \mathcal{G}$. By the definition of functional subgraph, it follows that $\mathcal{G} \preccurlyeq \mathcal{G}$.
\end{proof}


\begin{proposition}[Functional Equivalence Preservation]
    If $\mathcal{G}_1$ is a functional subgraph of $\mathcal{G}_2$, then:
        \begin{itemize}
            \item (Left-hand Side) if $\mathcal{G}_1\preccurlyeq\mathcal{G}_2$ and $\mathcal{G}'_1 \equiv_{func} \mathcal{G}_1$, then $\mathcal{G}'_1 \preccurlyeq \mathcal{G}_2$.
            \item (Right-hand Side) if $\mathcal{G}_1\preccurlyeq\mathcal{G}_2$ and $\mathcal{G}'_2 \equiv_{func} \mathcal{G}_2$, then $\mathcal{G}_1 \preccurlyeq \mathcal{G}'_2$.
        \end{itemize}
    \label{pro:eq_preserv}
\end{proposition}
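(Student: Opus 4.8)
The plan is to treat the two directions separately, since they are of quite different character: the Right-hand Side is essentially immediate from transitivity of functional equivalence, while the Left-hand Side needs a substitution (congruence) argument analogous to the proof of Proposition~\ref{pro:eq_func}.

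For the Right-hand Side, I would first unfold the hypothesis $\mathcal{G}_1 \preccurlyeq \mathcal{G}_2$ via Definition~\ref{def:subgraph_func}: there is a witness graph $\mathcal{G}'$ with $\mathcal{G}' \equiv_{func} \mathcal{G}_2$ such that $\mathcal{G}_1$ is isomorphic to a subgraph of $\mathcal{G}'$. Since functional equivalence is transitive, combining $\mathcal{G}' \equiv_{func} \mathcal{G}_2$ with the hypothesis $\mathcal{G}'_2 \equiv_{func} \mathcal{G}_2$ gives $\mathcal{G}' \equiv_{func} \mathcal{G}'_2$. The very same witness $\mathcal{G}'$ then certifies $\mathcal{G}_1 \preccurlyeq \mathcal{G}'_2$ directly from the definition, so no new construction is required here.

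For the Left-hand Side, I would again unfold $\mathcal{G}_1 \preccurlyeq \mathcal{G}_2$ to obtain a witness $\mathcal{G}'$ with $\mathcal{G}' \equiv_{func} \mathcal{G}_2$ containing an isomorphic copy $H$ of $\mathcal{G}_1$ as a subgraph. The hypothesis $\mathcal{G}'_1 \equiv_{func} \mathcal{G}_1$ together with $\mathcal{G}_1 \cong H$ yields $\mathcal{G}'_1 \equiv_{func} H$. I then replace the sub-circuit $H$ inside $\mathcal{G}'$ with a copy of $\mathcal{G}'_1$, forming $\mathcal{G}'' = (\mathcal{G}' \setminus H) \cup \mathcal{G}'_1$, exactly the splicing move used in the proof of Proposition~\ref{pro:eq_func}. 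By construction $\mathcal{G}'_1$ is a literal subgraph of $\mathcal{G}''$, so it remains only to show $\mathcal{G}'' \equiv_{func} \mathcal{G}_2$, which follows by transitivity once I establish $\mathcal{G}'' \equiv_{func} \mathcal{G}'$.

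The main obstacle is precisely this last step: justifying that swapping a functionally equivalent sub-circuit preserves the function of the enclosing graph. I would lean on the single-output assumption stated in the paper, so that $H$ has a well-defined output node feeding the rest of $\mathcal{G}'$ and an input interface matching that of $\mathcal{G}'_1$ (the relation $\mathcal{G}'_1 \equiv_{func} H$ forcing agreement on the shared inputs). Because the signal produced at the output of the replaced block is, as a Boolean function of the primary inputs, identical before and after the swap, every downstream gate receives the same value, leaving the overall output function unchanged; this is the congruence of $\equiv_{func}$ under substitution. Some care is needed to confirm the replacement remains legal under the paper's non-trivial-function assumption, \ie that no redundant connections are created, but with that check the argument closes.
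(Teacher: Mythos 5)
Your proposal matches the paper's proof essentially step for step: the Right-hand Side is handled by reusing the same witness graph after applying transitivity of functional equivalence, and the Left-hand Side uses the identical splicing construction $\mathcal{G}'' = (\mathcal{G}' \setminus H) \cup \mathcal{G}'_1$ followed by the observation that $\mathcal{G}''$ remains functionally equivalent to $\mathcal{G}_2$ while containing $\mathcal{G}'_1$ as a subgraph. The only difference is that you explicitly argue why substituting a functionally equivalent sub-circuit preserves the enclosing graph's function (via the single-output interface), a congruence step the paper simply asserts.
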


\begin{proof} 
\textit{\textbf{(Right-hand Side)}} According to the Definition~\ref{def:subgraph_func}, if $\mathcal{G}_1$ is a functional subgraph of $\mathcal{G}_2$, then there exist $\mathcal{G}^{*}_2$ that $\mathcal{G}^{*}_2 \equiv_{\mathrm{func}} \mathcal{G}_2$ and $\mathcal{G}_1$ is an isomorphic subgraph of $\mathcal{G}^{*}_2$. Since $\mathcal{G}'_2 \equiv_{\mathrm{func}} \mathcal{G}_2$ and $\mathcal{G}^{*}_2 \equiv_{\mathrm{func}} \mathcal{G}_2$, then $\mathcal{G}^{*}_2$. Since $\mathcal{G}'_2 \equiv_{\mathrm{func}} \mathcal{G}'_2$. By the Definition~\ref{def:subgraph_func}, $\mathcal{G}_1$ is a functional subgraph of $\mathcal{G}'_2$.

\textit{\textbf{(Left-hand Side)}}
By Definition~\ref{def:subgraph_func}, there exists a graph $\mathcal{G}_2' \equiv_{\mathrm{func}} \mathcal{G}_2$, such that
\begin{equation}
\mathcal{G}_1 \cong \bar{\mathcal{G}}_2,
\end{equation}
where $\bar{\mathcal{G}}_2$ is a subgraph of $\mathcal{G}_2'$. Since $\mathcal{G}_1 \cong \bar{\mathcal{G}}_2$, it follows that
\begin{equation}
\mathcal{G}_1 \equiv_{\mathrm{func}} \bar{\mathcal{G}}_2.
\end{equation}
By the transitivity of functional equivalence, we then have
\begin{equation}
\mathcal{G}_1 \equiv_{\mathrm{func}} \bar{\mathcal{G}}_2 \equiv_{\mathrm{func}} \mathcal{G}_1'.
\end{equation}
Thus, by replacing $\bar{\mathcal{G}}_2$ in $\mathcal{G}_2'$ with $\mathcal{G}_1'$, we obtain a new graph
\begin{equation}
\mathcal{G}_2'' = (\mathcal{G}_2' \setminus \bar{\mathcal{G}}_2) \cup \mathcal{G}_1',
\end{equation}
which satisfies
\begin{equation}
\mathcal{G}_2'' \equiv_{\mathrm{func}} \mathcal{G}_2.
\end{equation}
From the definition of functional equivalence, we know that $\mathcal{G}_2'' \equiv_{\mathrm{func}} \mathcal{G}_2$ and that $\mathcal{G}_1'$ is a subgraph of $\mathcal{G}_2''$. Therefore, it follows that
\begin{equation}
\mathcal{G}_1' \preccurlyeq \mathcal{G}_2.
\end{equation}
\end{proof}

\begin{proposition}[Transitivity]
If $\mathcal{G}_1\preccurlyeq\mathcal{G}_2$ and $\mathcal{G}_2\preccurlyeq\mathcal{G}_3$, then $\mathcal{G}_1\preccurlyeq\mathcal{G}_3$.
\end{proposition}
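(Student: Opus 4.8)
The plan is to reduce the statement to the already-established \emph{Functional Equivalence Preservation} property (Proposition~\ref{pro:eq_preserv}) together with the elementary transitivity of the ordinary ``isomorphic to a subgraph of'' relation, so that no fresh substitution argument is needed beyond what is encapsulated in the Left-hand Side case.

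First I would unfold the hypothesis $\mathcal{G}_1 \preccurlyeq \mathcal{G}_2$ using Definition~\ref{def:subgraph_func}: this yields an intermediate graph $\mathcal{G}_2'$ with $\mathcal{G}_2' \equiv_{\mathrm{func}} \mathcal{G}_2$ such that $\mathcal{G}_1$ is isomorphic to a subgraph of $\mathcal{G}_2'$. Next, because $\mathcal{G}_2' \equiv_{\mathrm{func}} \mathcal{G}_2$ and $\mathcal{G}_2 \preccurlyeq \mathcal{G}_3$, I would invoke the Left-hand Side of Proposition~\ref{pro:eq_preserv} (with $\mathcal{G}_2$ in the left position) to upgrade the second hypothesis to $\mathcal{G}_2' \preccurlyeq \mathcal{G}_3$. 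Unfolding this in turn gives a witness $\mathcal{G}_3'$ with $\mathcal{G}_3' \equiv_{\mathrm{func}} \mathcal{G}_3$ and $\mathcal{G}_2'$ isomorphic to a subgraph of $\mathcal{G}_3'$.

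The final step composes the two embeddings. Since $\mathcal{G}_1$ is isomorphic to a subgraph of $\mathcal{G}_2'$ and $\mathcal{G}_2'$ is in turn isomorphic to a subgraph of $\mathcal{G}_3'$, restricting the second isomorphism to the image of the first shows that $\mathcal{G}_1$ is isomorphic to a subgraph of $\mathcal{G}_3'$. With the single witness $\mathcal{G}_3' \equiv_{\mathrm{func}} \mathcal{G}_3$ now containing an isomorphic copy of $\mathcal{G}_1$, Definition~\ref{def:subgraph_func} immediately yields $\mathcal{G}_1 \preccurlyeq \mathcal{G}_3$.

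I expect the only genuinely delicate point to be the appeal to the Left-hand Side preservation, which silently relies on the substitution principle: replacing a subgraph of a circuit by a functionally equivalent one, along the matched input/output interface, preserves the global function. Everything else (composition of subgraph isomorphisms, transitivity of $\equiv_{\mathrm{func}}$) is routine. As an alternative that avoids citing Proposition~\ref{pro:eq_preserv}, I could argue directly: take the copy $\bar{\mathcal{G}}_2 \cong \mathcal{G}_2$ sitting inside the witness for $\mathcal{G}_2 \preccurlyeq \mathcal{G}_3$, replace it in place by $\mathcal{G}_2'$ (legitimate since $\bar{\mathcal{G}}_2 \equiv_{\mathrm{func}} \mathcal{G}_2 \equiv_{\mathrm{func}} \mathcal{G}_2'$), and observe that the resulting graph is functionally equivalent to $\mathcal{G}_3$ while containing the isomorphic copy of $\mathcal{G}_1$ hidden inside $\mathcal{G}_2'$; this makes the substitution step explicit rather than deferring it to the preservation lemma.
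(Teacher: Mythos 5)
Your proposal is correct and follows essentially the same route as the paper's proof: unfold $\mathcal{G}_1 \preccurlyeq \mathcal{G}_2$ to get a witness $\mathcal{G}_2'$, apply the Left-hand Side of the Functional Equivalence Preservation property to obtain $\mathcal{G}_2' \preccurlyeq \mathcal{G}_3$, unfold that to get $\mathcal{G}_3'$, and compose the two subgraph containments inside $\mathcal{G}_3'$. Your version is in fact slightly more careful than the paper's in explicitly composing the two subgraph \emph{isomorphisms} (the paper treats $\mathcal{G}_2'$ as literally a subgraph of $\mathcal{G}_3'$), but the argument is the same.
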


\begin{proof}
By definition, there exists a graph $\mathcal{G}_2' \equiv_{\mathrm{func}} \mathcal{G}_2$, such that
\begin{equation}
\mathcal{G}_1 \cong \bar{\mathcal{G}}_1, \text{and } \bar{\mathcal{G}}_1 \text{is a subgraph of }\mathcal{G}_2'.
\end{equation} Since $\mathcal{G}_2' \equiv_{\mathrm{func}} \mathcal{G}_2$, by Proposition~\ref{pro:eq_preserv}, it follows that $\mathcal{G}_2' \preccurlyeq \mathcal{G}_3$. 

Therefore, there exists a graph $\mathcal{G}_3' \equiv_{\mathrm{func}} \mathcal{G}_3$, and $\mathcal{G}_2'$ is a subgraph of $\mathcal{G}_3'$. Since $\bar{\mathcal{G}}_1$ is a subgraph of $\mathcal{G}_2'$ and $\mathcal{G}_2'$ is a subgraph of $\mathcal{G}_3'$, it follows that $\bar{\mathcal{G}}_1$ is a subgraph of $\mathcal{G}_3'$.

Since $\mathcal{G}_1 \cong \bar{\mathcal{G}}_1$, $\mathcal{G}_3' \equiv_{\mathrm{func}} \mathcal{G}_3$ and $\bar{\mathcal{G}}_1$ is a subgraph of $\mathcal{G}_3'$, by the definition of functional subgraph, we conclude that
\begin{equation}
\mathcal{G}_1 \preccurlyeq \mathcal{G}_3.
\end{equation}
\end{proof}

\begin{proposition}[Anti‐symmetry]
$\mathcal{G}_1\preccurlyeq\mathcal{G}_2$ and $\mathcal{G}_2\preccurlyeq\mathcal{G}_1$ if and only if $\mathcal{G}_1\equiv_{\mathrm{func}}\mathcal{G}_2$.
\end{proposition}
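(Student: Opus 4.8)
The statement is an equivalence, so the plan is to treat the two directions separately; I expect the forward direction (that $\mathcal{G}_1\preccurlyeq\mathcal{G}_2$ and $\mathcal{G}_2\preccurlyeq\mathcal{G}_1$ together imply $\mathcal{G}_1\equiv_{\mathrm{func}}\mathcal{G}_2$) to carry essentially all of the difficulty.

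For the reverse direction, assume $\mathcal{G}_1\equiv_{\mathrm{func}}\mathcal{G}_2$. To obtain $\mathcal{G}_1\preccurlyeq\mathcal{G}_2$ I would simply take the witness $\mathcal{G}'=\mathcal{G}_1$ in Definition~\ref{def:subgraph_func}: it satisfies $\mathcal{G}'\equiv_{\mathrm{func}}\mathcal{G}_2$ by hypothesis, and $\mathcal{G}_1$ is trivially isomorphic to a subgraph of itself. The relation $\mathcal{G}_2\preccurlyeq\mathcal{G}_1$ then follows by symmetry, or equivalently by combining Reflexivity with the Functional Equivalence Preservation property (Proposition~\ref{pro:eq_preserv}). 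This direction is routine and requires no new ideas.

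The forward direction rests on a single lemma distilled from the non-trivial function assumption $\forall g\neq\emptyset,\ G\setminus g\not\equiv_{\mathrm{func}}G$: if $H\equiv_{\mathrm{func}}K$ and $H$ is isomorphic to a subgraph $\bar{H}$ of $K$, then necessarily $\bar{H}=K$ and hence $H\cong K$. Indeed $\bar{H}\equiv_{\mathrm{func}}H\equiv_{\mathrm{func}}K$, so if $\bar{H}$ were a proper subgraph we could write $\bar{H}=K\setminus g$ with $g\neq\emptyset$, contradicting the assumption; in words, under irredundancy a functionally equivalent isomorphic copy must exhaust its host. I would then unpack the two hypotheses: $\mathcal{G}_1\preccurlyeq\mathcal{G}_2$ supplies some $\mathcal{G}_2'\equiv_{\mathrm{func}}\mathcal{G}_2$ with $\mathcal{G}_1\cong\bar{\mathcal{G}}_1\subseteq\mathcal{G}_2'$, and $\mathcal{G}_2\preccurlyeq\mathcal{G}_1$ supplies some $\mathcal{G}_1'\equiv_{\mathrm{func}}\mathcal{G}_1$ with $\mathcal{G}_2\cong\bar{\mathcal{G}}_2\subseteq\mathcal{G}_1'$.

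To close the argument I would introduce the minimal realization size $\nu(f)=\min\{|H|:H\equiv_{\mathrm{func}}f\}$ (writing $f_i$ for the function computed by $\mathcal{G}_i$), which is well defined on functional-equivalence classes, and, using that $\preccurlyeq$ is invariant under functional equivalence (Proposition~\ref{pro:eq_preserv}), reduce to minimal realizations. The goal is a strict size-monotonicity sub-lemma: $\mathcal{G}_1\preccurlyeq\mathcal{G}_2$ with $\mathcal{G}_1\not\equiv_{\mathrm{func}}\mathcal{G}_2$ should force $\nu(f_1)<\nu(f_2)$, since the Key Lemma rules out a whole-graph embedding and leaves only a proper one; applying this in both directions would yield $\nu(f_1)<\nu(f_2)<\nu(f_1)$, a contradiction, so $\mathcal{G}_1\equiv_{\mathrm{func}}\mathcal{G}_2$. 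The hard part is precisely this sub-lemma: the witness $\mathcal{G}_2'$ guaranteed by the definition is merely \emph{some} graph equivalent to $\mathcal{G}_2$ and may be far from minimal, so the naive estimate only gives $|\mathcal{G}_1|\le|\mathcal{G}_2'|$ with $|\mathcal{G}_2'|$ uncontrolled. Turning this into a bound against $\nu(f_2)$, \ie showing that a function cannot sit as a proper subgraph inside arbitrarily small irredundant realizations of a different function, is exactly where the irredundancy assumption must be used in earnest, plausibly through an extremal or minimal-counterexample argument over all admissible witnesses; this quantitative step, rather than the definitional bookkeeping, is where I expect the proof to demand real care.
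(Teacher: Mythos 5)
Your reverse direction is correct and matches the paper (reflexivity plus Functional Equivalence Preservation). Your forward direction, however, has a genuine gap: the entire argument is routed through the strict size-monotonicity sub-lemma $\nu(f_1)<\nu(f_2)$, which you never prove and, as you yourself observe, does not follow from the definition --- the witness $\mathcal{G}_2'$ in Definition~\ref{def:subgraph_func} is an arbitrary realization of $f_2$, so containment of $\mathcal{G}_1$ in it says nothing about the \emph{minimal} realization size of $f_2$. Since functionally equivalent circuits can have realizations of very different sizes (the paper's own synthesis example $(A\land B)\land(A\land C)$ versus $A\land B\land C$), there is no evident route from the irredundancy assumption to a bound on $\nu(f_2)$, and your ``plausibly through an extremal or minimal-counterexample argument'' is a placeholder, not a proof. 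The Key Lemma you distill (an isomorphic copy of $H$ inside a functionally equivalent host $K$ must exhaust $K$) is correct and is essentially the right tool, but as stated it needs $H\equiv_{\mathrm{func}}K$, which is exactly the conclusion you are after, so it cannot be applied to the two independent witnesses $\mathcal{G}_2'$ and $\mathcal{G}_1'$ directly.

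The paper closes this by \emph{composing} the two hypotheses rather than comparing sizes. Writing the first hypothesis as $\mathcal{G}_1\cong\mathcal{G}_2'\setminus g$ with $\mathcal{G}_2'\equiv_{\mathrm{func}}\mathcal{G}_2$, it applies Functional Equivalence Preservation (Proposition~\ref{pro:eq_preserv}) to $\mathcal{G}_2\preccurlyeq\mathcal{G}_1$ to transport both sides onto this witness, obtaining $\mathcal{G}_2'\preccurlyeq\mathcal{G}_2'\setminus g$. Unfolding the definition once more yields a graph $\mathcal{G}_3\equiv_{\mathrm{func}}\mathcal{G}_2'\setminus g$ containing a copy of $\mathcal{G}_2'$, i.e.\ $\mathcal{G}_2'\cong\mathcal{G}_3\setminus g'$; now a single containment chain $\mathcal{G}_2'\setminus g\subseteq\mathcal{G}_2'\subseteq\mathcal{G}_3$ relates two graphs that \emph{are} known to be functionally equivalent, and the irredundancy assumption (your Key Lemma, applied with $H=\mathcal{G}_2'\setminus g$ and $K=\mathcal{G}_3$) forces $g=g'=\emptyset$, whence $\mathcal{G}_1\cong\mathcal{G}_2'\equiv_{\mathrm{func}}\mathcal{G}_2$. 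You would need to add this composition step (or an equivalent one) to make your argument go through; the minimal-realization machinery can then be discarded entirely.
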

\begin{proof}
($\Rightarrow$) Since $\mathcal{G}_1 \preccurlyeq \mathcal{G}_2$, we have 
\begin{equation}
    \mathcal{G}_1 \cong \mathcal{G}_2' \setminus g, \quad \text{and} \quad \mathcal{G}_2 \equiv_{\mathrm{func}} \mathcal{G}_2'.
\end{equation}
Since $\mathcal{G}_2 \preccurlyeq \mathcal{G}_1$, we have $\mathcal{G}_2' \preccurlyeq \mathcal{G}_2' \setminus g$. By the definition of functional subgraphs, there exists a graph $\mathcal{G}_3$ such that $\mathcal{G}_3 \equiv_{\mathrm{func}} \mathcal{G}_2' \setminus g$ and $\mathcal{G}_2'$ is a subgraph of $\mathcal{G}_3$. This implies that $\mathcal{G}_2' \cong \mathcal{G}_3 \setminus g'$, so we also have
\begin{equation}
\mathcal{G}_2' \equiv_{\mathrm{func}} \mathcal{G}_3 \setminus g'.
\end{equation}
Since $\mathcal{G}_3 \equiv_{\mathrm{func}} \mathcal{G}_2' \setminus g$, it follows that
\begin{equation}
    \mathcal{G}_3 \cup g \equiv_{\mathrm{func}} \mathcal{G}_2'.
\end{equation}
Thus, we have
\begin{equation}
    \mathcal{G}_3 \cup g \equiv_{\mathrm{func}} \mathcal{G}_2' \equiv_{\mathrm{func}} \mathcal{G}_3 \setminus g'.
\end{equation}
Note that in Section~\ref{sec:property}, we assume that a graph obtained by removing some nodes and edges is not functionally equivalent to the original graph, i.e., $\forall g \neq \emptyset$, $G \setminus g \not\equiv_{\mathrm{func}} G$. Therefore, we must have $g = g' = \emptyset$, which implies
\begin{equation}
    \mathcal{G}_1 \cong \mathcal{G}_2' \setminus g \cong \mathcal{G}_2', \quad \text{and} \quad \mathcal{G}_2 \equiv_{\mathrm{func}} \mathcal{G}_2'.
\end{equation}
Thus, we conclude that
\begin{equation}
    \mathcal{G}_1 \equiv_{\mathrm{func}} \mathcal{G}_2.
\end{equation}

($\Leftarrow$) If $\mathcal{G}_1\equiv_{\mathrm{func}}\mathcal{G}_2$, since $\mathcal{G}_1\preccurlyeq\mathcal{G}_1$ and $\mathcal{G}_2\preccurlyeq\mathcal{G}_2$, according to \textit{Functional Equivalence Preservation} property, it follows that $\mathcal{G}_1\preccurlyeq\mathcal{G}_2$ and $\mathcal{G}_2\preccurlyeq\mathcal{G}_1$.
\end{proof}

\section{Additional Experimental Results}
\label{apdx:other_baseline}
\subsection{Functional Subgraph Matching} 
Considering that the encoder in our method can be replaced with other backbones, we test our approach with different encoders and propose baselines for the functional subgraph detection task, as shown in Table~\ref{tab:stage1_baseline}.

\begin{table}[h]
\small
\setlength{\tabcolsep}{3pt}
\centering
\caption{Result of baselines in stage \#1.}

\begin{tabular}{@{}cccccccccc@{}}
\toprule
\multirow{2}{*}{Dataset} & \multirow{2}{*}{Method} & \multicolumn{4}{c}{$\mathcal{G}_{sub} \to \mathcal{G}_{syn}$} & \multicolumn{4}{c}{$\mathcal{G}_{sub} \to \mathcal{G}_{pm}$} \\ \cmidrule(l){3-10} 
 &  & Accuracy & Precision & Recall & F1-score & Accuracy & Precision & Recall & F1-score \\ \midrule
\multirow{3}{*}{ITC99} & Ours+Gamora & 90.8 & 91.1 & 90.4 & 90.7 & 86.4 & 88.6 & 83.5 & 86.0 \\
 & Ours+ABGNN & 87.9 & 83.1 & 95.1 & 88.7 & 88.2 & 82.8 & \textbf{96.5} & 89.1 \\
 & Ours & \textbf{95.3} & \textbf{94.4} & \textbf{96.3} & \textbf{95.4} & \textbf{93.1} & \textbf{92.3} & 94.2 & \textbf{93.2} \\ \midrule
\multirow{3}{*}{OpenABCD} & Ours+Gamora & 90.1 & 89.6 & \textbf{90.7} & 90.2 & \textbf{91.0} & 89.3 & \textbf{93.2} & \textbf{91.2} \\
 & Ours+ABGNN & 81.7 & 78.5 & 87.5 & 82.7 & 83.3 & 78.9 & 91.1 & 84.5 \\
 & Ours & \textbf{92.3} & \textbf{93.7} & 90.6 & \textbf{92.1} & 90.8 & \textbf{92.4} & 88.9 & 90.6 \\ \midrule
\multirow{3}{*}{ForgeEDA} & Ours+Gamora & 94.2 & 95.9 & 92.4 & 94.1 & 80.6 & 93.8 & 65.5 & 77.1 \\
 & Ours+ABGNN & 89.7 & 88.5 & 91.2 & 89.8 & 87.6 & 88.3 & 86.8 & 87.5 \\
 & Ours & \textbf{96.0} & \textbf{96.8} & \textbf{95.2} & \textbf{96.0} & \textbf{95.3} & \textbf{95.9} & \textbf{94.7} & \textbf{95.3} \\ \bottomrule
\end{tabular}

\label{tab:stage1_baseline}
\end{table}

\subsection{Fuzzy Boundary Identification}
We further evaluate these methods on fuzzy boundary identification. The results are shown in Table~\ref{tab:stage2_baseline}.

\begin{table}[H]
\caption{Result of baselines in stage \#2.}
\small
\centering
\begin{tabular}{@{}ccccccc@{}}
\toprule
\multirow{2}{*}{Method} & \multicolumn{2}{c}{ITC99} & \multicolumn{2}{c}{OpenABCD} & \multicolumn{2}{c}{ForgeEDA} \\ \cmidrule(l){2-7} 
 & IoU & DICE & IoU & DICE & IoU & DICE \\ \midrule
Ours+Gamora & 82.1 & 90.2 & 81.4 & 89.8 & 83.6 & 91.1 \\
Ours+ABGNN & 82.7 & 90.5 & 84.4 & 91.5 & \textbf{88.4} & \textbf{93.8} \\
Ours & \textbf{83.0} & \textbf{90.7} & \textbf{85.2} & \textbf{92.0} & 83.8 & 91.2 \\ \bottomrule
\end{tabular}
\label{tab:stage2_baseline}
\end{table}

\subsection{Scalability on Medium-Sized Circuits}
we collect a medium-sized graph dataset from ForgeEDA~\cite{shi2025forgeeda}, containing circuits with graph sizes ranging from 100 to 10000 nodes. The statistical information of the medium-sized dataset is shown in Table~\ref{tab:med_dataset}. 
\begin{table}[htbp]
\centering
\caption{Statistics of the medium-sized dataset.}
\label{tab:med_dataset}
\begin{tabular}{lccc}
\toprule
\textbf{Graph Type} & \textbf{Nodes} & \textbf{Edges} & \textbf{Depth} \\
\midrule
$G_{sub}$ & $192.1 \pm 320.87$  & $207.16 \pm 354.01$ & $27.91 \pm 30.08$  \\
$G_{syn}$ & $1396.99 \pm 1764.63$ & $1958.84 \pm 2519.23$ & $66.48 \pm 99.7$   \\
$G_{pm}$  & $679.93 \pm 851.2$  & $1352.96 \pm 1703.16$ & $21.83 \pm 32.6$   \\
\bottomrule
\end{tabular}
\end{table}

We perform functional subgraph detection on this dataset, and the results are shown in Table~\ref{tab:med_functional_subgraph} and ~\ref{tab:med_functional_subgraph_pm}. Since ABGNN encounters out-of-memory error when encoding graphs with deep logic levels, we do not report its results on this new dataset. While our method still demonstrates state-of-the-art performance, it shows a significant performance drop (from an F1-score of 95.3\% to 81.2\%, as shown in Table 1). This result highlights the challenge of scaling to larger circuits. We hope future work will explore and address this challenge.

\begin{table}[htbp]
\centering
\caption{Functional Subgraph Detection on $G_{sub} \to G_{syn}$.}
\label{tab:med_functional_subgraph}
\begin{tabular}{lcccc}
\toprule
\textbf{Method} & \textbf{Accuracy} & \textbf{Precision} & \textbf{Recall} & \textbf{F1-score} \\
\midrule
NeuroMatch & $51.2_{\pm3.3}$ & $34.6_{\pm24.5}$ & $66.7_{\pm47.1}$ & $45.5_{\pm32.2}$ \\
HGCN       & $50.0_{\pm0.0}$ & $16.7_{\pm23.6}$ & $33.3_{\pm47.1}$ & $22.2_{\pm31.4}$ \\
Gamora     & $50.0_{\pm0.0}$ & $40.0_{\pm14.1}$ & $66.7_{\pm47.1}$ & $44.6_{\pm31.3}$ \\
\textbf{Ours} & $\mathbf{81.5_{\pm0.6}}$ & $\mathbf{82.7_{\pm1.1}}$ & $\mathbf{79.8_{\pm1.7}}$ & $\mathbf{81.2_{\pm0.8}}$ \\
\bottomrule
\end{tabular}
\end{table}

\begin{table}[htbp]
\centering
\caption{Functional Subgraph Detection on $G_{sub} \to G_{pm}$.}
\label{tab:med_functional_subgraph_pm}
\begin{tabular}{lcccc}
\toprule
\textbf{Method} & \textbf{Accuracy} & \textbf{Precision} & \textbf{Recall} & \textbf{F1-score} \\
\midrule
NeuroMatch & $51.0_{\pm1.2}$ & $33.9_{\pm24.0}$ & $66.6_{\pm47.1}$ & $44.9_{\pm31.8}$ \\
HGCN       & $50.0_{\pm0.0}$ & $16.7_{\pm23.6}$ & $33.3_{\pm47.1}$ & $22.2_{\pm31.4}$ \\
Gamora     & $50.0_{\pm0.0}$ & $33.3_{\pm23.6}$ & $66.7_{\pm47.1}$ & $44.4_{\pm31.4}$ \\
\textbf{Ours} & $\mathbf{78.9_{\pm1.0}}$ & $\mathbf{80.6_{\pm1.6}}$ & $\mathbf{76.3_{\pm2.7}}$ & $\mathbf{78.3_{\pm1.3}}$ \\
\bottomrule
\end{tabular}
\end{table}

\subsection{Comparison with VF3}
we evaluate the state-of-the-art subgraph isomorphic heuristic algorithm VF3~\cite{vf3}, which consistently achieves 100\% precision and 100\% recall on standard subgraph isomorphism tasks. Due to time constraints, we sampled circuits with fewer than 50 nodes from the ForgeEDA dataset and applied the VF3 algorithm. The results are shown in Table~\ref{tab:subgraph_isomorphism_runtime}.
According to our Definition~\ref{def:subgraph_func} of Functional Subgraph, if $Q$ is an isomorphic subgraph of $G$, then $Q$ is always a functional subgraph of $G$. This is demonstrated by the 100\% precision achieved by VF3. However, due to the function-preserving transformation, the explicit structure of $Q$ often disappears, leading to extremely low recall(0.32\%) for VF3. These results highlight the importance of task definition. 

\begin{table*}[htbp]
\centering
\caption{Comparison of subgraph isomorphism methods on different tasks.}
\label{tab:subgraph_isomorphism_runtime}
\begin{tabular}{lrrrrrrr}
\toprule
& & \multicolumn{3}{c}{\textbf{$G_{sub} \to G_{syn}$}} & \multicolumn{3}{c}{\textbf{$G_{sub} \to G_{pm}$}} \\
\cmidrule(lr){3-5} \cmidrule(lr){6-8}
\textbf{Method} & \textbf{Runtime (s)} & Precision & Recall & F1-score & Precision & Recall & F1-score \\
\midrule
VF3  & 480.8 & \textbf{100.0} & 0.32 & 0.65 & --- & --- & --- \\
\textbf{Ours} & \textbf{8.0} & 88.5 & \textbf{91.9} & \textbf{90.2} & \textbf{86.4} & \textbf{94.5} & \textbf{90.3} \\
\bottomrule
\end{tabular}
\end{table*}

\section{Datasets and Implementation Details}
\label{apdx:dataset}
\paragraph{Dataset}
Dataset statistics and splits are shown in Table ~\ref{tab:data}. For dataset split, we first split the training circuits and test circuits in the source dataset, then we cut subgraph for the training circuit and test circuits to generate our small circuit dataset. For ITC99 and OpenABCD, the split follow the previous work~\cite{zheng2025deepgate4}. For ForgeEDA, we randomly select $10\%$ circuits in the dataset as test circuits. For small circuit, we apply Algorithm~\ref{alg:bfs} to randomly sample subgraph.

\begin{algorithm}[h]
\renewcommand{\algorithmicrequire}{\textbf{Input:}}
\renewcommand{\algorithmicensure}{\textbf{Output:}}
\caption{Random Sample Subgraph}
\centering
\begin{algorithmic}[1.0]
\REQUIRE ndoes $\mathcal{V}$, edges $\mathcal{E}$, root $r$
\ENSURE nodes $V$, edges $E$, root $r$
\STATE Build adjacency $G$ from $\mathcal{E}$
\IF{$rand(0,1)<0.5$} 
\STATE Set $r$ to the predecessor $p\in G[r]$ that maximizes $\mathrm{predCount}(p)$
\ENDIF
\STATE $\rho \gets rand(0.6,0.95)$
\STATE $T\gets\rho\cdot|\mathcal{V}|$, $V\gets\{r\}$, $Q\gets[r]$, $E\gets\emptyset$
\WHILE{$Q\neq\emptyset\wedge|V|<T$}
  \STATE $n\gets\mathrm{pop}(Q)$
  \FORALL{$v\in\mathrm{shuffle}(G[n])$}
    \IF{$v\notin V$}
      \STATE push($Q,v$); $V\cup\{v\}$; $E\cup\{(n,v)\}$
    \ENDIF
  \ENDFOR
\ENDWHILE
\RETURN $V,E,r$
\end{algorithmic}
\label{alg:bfs}
\end{algorithm}


\begin{table}[H]
\setlength{\tabcolsep}{3pt}
\centering
\caption{Dataset Statistics. We report average and standard error with $avg.\pm std.$}
\scalebox{0.9}{
\begin{tabular}{@{}ccccccccccc@{}}
\toprule
\multirow{2}{*}{\textbf{\begin{tabular}[c]{@{}c@{}}Source \\ Dataset\end{tabular}}} & \multirow{2}{*}{\textbf{Split}} & \multirow{2}{*}{\textbf{\#Pair}} & \multicolumn{2}{c}{\textbf{$\mathcal{G}_{sub}$}} & \multicolumn{2}{c}{\textbf{$\mathcal{G}_{aig}$}} & \multicolumn{2}{c}{\textbf{$\mathcal{G}_{syn}$}} & \multicolumn{2}{c}{\textbf{$\mathcal{G}_{pm}$}} \\ \cmidrule(l){4-11} 
 &  &  & \textbf{\#Node} & \textbf{Depth} & \textbf{\#Node} & \textbf{Depth} & \textbf{\#Node} & \textbf{Depth} & \textbf{\#Node} & \textbf{Depth} \\ \midrule
\multirow{2}{*}{ITC99} & train & 36592 & $248_{\pm 132}$ & 15.0$_{\pm 2.0}$ & $320_{\pm 166}$ & 19.1$_{\pm 3.0}$ & $315_{\pm 164}$ & 19.0$_{\pm 3.0}$ & $179_{\pm 91}$ & 6.9$_{\pm 1.0}$ \\
 & test & 5917 & $218_{\pm 113}$ & 14.0$_{\pm 2.0}$ & $282_{\pm 141}$ & 17.3$_{\pm 2.2}$ & $278_{\pm 138}$ & 17.0$_{\pm 2.0}$ & $157_{\pm 79}$ & 6.3$_{\pm 0.9}$ \\ \midrule
\multirow{2}{*}{OpenABCD} & train & 54939 & $155_{\pm 113}$ & 13.0$_{\pm 2.0}$ & $203_{\pm 140}$ & 16.4$_{\pm 3.2}$ & $198_{\pm 134}$ & 16.0$_{\pm 3.0}$ & $108_{\pm 75}$ & 5.8$_{\pm 1.1}$ \\
 & test & 9726 & $100_{\pm 66}$ & 13.0$_{\pm 2.0}$ & $132_{\pm 84}$ & 16.0$_{\pm 2.2}$ & $128_{\pm 82}$ & 15.0$_{\pm 2.0}$ & $69_{\pm 46}$ & 5.5$_{\pm 0.9}$ \\ \midrule
\multirow{2}{*}{ForgeEDA} & train & 60183 & $126_{\pm 102}$ & 13.4$_{\pm 3.5}$ & $161_{\pm 129}$ & 16.6$_{\pm 4.2}$ & $156_{\pm 125}$ & 16.2$_{\pm 4.5}$ & $88_{\pm 69}$ & 5.8$_{\pm 1.4}$ \\
 & test & 7753 & $127_{\pm 96}$ & 13.6$_{\pm 3.3}$ & $163_{\pm 122}$ & 17.0$_{\pm 3.8}$ & $159_{\pm 120}$ & 16.4$_{\pm 4.2}$ & $89_{\pm 65}$ & 5.9$_{\pm 1.3}$ \\ \bottomrule
\end{tabular}
}
\label{tab:data}
\end{table}

\paragraph{Environment}
All experiments are run on an NVIDIA A100 GPU with 64\,GB of memory. Models are trained using the Adam optimizer with a learning rate of 0.001, a batch size of 1024. We train our model in stage\#1 for 100 epochs and finetune it in stage\#2 for 10 epochs. Training our model on one dataset takes approximately 10 hours. Model architectures follow the configurations specified in the original works except that we set the hidden dimension to 128 for all models.

\paragraph{Evaluation Metrics}
For Stage \#1, we measure classification performance by accuracy and report precision, recall and f1-score according to the counts of true positives (TP), true negatives (TN), false positives (FP), and false negatives (FN):
\begin{align*}
\mathrm{Precision}
= \frac{\mathrm{TP}}{\mathrm{TP} + \mathrm{FP}}&, \  
\mathrm{Recall}
= \frac{\mathrm{TP}}{\mathrm{TP} + \mathrm{FN}}, \\
\mathrm{Accuracy}
= \frac{\mathrm{TP} + \mathrm{TN}}{\mathrm{TP} + \mathrm{TN} + \mathrm{FP} + \mathrm{FN}}&, \ 
\mathrm{F1\text{-}score}
= \frac{2 \times \mathrm{Precision} \times \mathrm{Recall}}{\mathrm{Precision} + \mathrm{Recall}}.
\end{align*}
For Stage \#2, which is similar to a segmentation task, we use Intersection over Union (IoU) and the Dice coefficient. Let \(P\) be the set of predicted positive nodes and \(G\) the set of ground-truth positive nodes:
\begin{equation}
\mathrm{IoU}
= \frac{\lvert P \cap G \rvert}{\lvert P \cup G \rvert}, \ \mathrm{Dice}
= \frac{2\,\lvert P \cap G \rvert}{\lvert P \rvert + \lvert G \rvert}
\end{equation}

\section{Background}
\label{apdx:background}
\paragraph{And-Inverter Graph(AIG)} In our works, AIG is a directed acyclic graph (DAG) composed of three basic elements: Primary Input(PI), AND gate and NOT gate. For example, a simple logic expression $\lnot A\land B$ can be build as a DAG with 2 PIs(A and B), one NOT gate and one AND gate. The edges are $[(A,NOT), (NOT,AND),(B,AND)]$. Since the out-degree of $AND$ is zero, it represents the final output.
\paragraph{Technology Mapping} Technology Mapping is a function-preserving transformation that converts an AIG into a post-mapping (PM) netlist. While the AIG consists of simple logic elements, such as AND and NOT gates, the basic components in a PM netlist can be more complex, such as adders or multipliers. As a result, node types can differ significantly between the two forms, and this is why we consider AIG and PM netlists as distinct modalities in this paper.
\paragraph{Logic Synthesis} Logic synthesis aims to simplify the structure of an AIG while preserving its functionality. It transforms one AIG into another with a simpler structure. For example, the expression $eq_1: (A\land B)\land(A\land C)$ can be simplified to $eq_2:A\land B \land C$.  Although $eq_1$ and $eq_2$ are functional equivalent, $eq_2$ uses only 2 AND gates compared to 3 AND gates in $eq_1$.
\paragraph{InfoNCE Loss} InfoNCE (Information Noise-Contrastive Estimation) is a contrastive loss used in self-supervised learning.
Its goal is to identify a single ``positive'' sample from a set of ``negative'' samples for a given ``anchor'' sample. It pulls the anchor and positive representations together while pushing the anchor and negatives apart:
\begin{equation}
    \mathcal{L}_{InfoNCE} = -\log \frac{\exp(\text{sim}(q, k_+) / \tau)}{\sum_{i=0}^{N} \exp(\text{sim}(q, k_i) / \tau)}
\end{equation}
where $q$ is the anchor, $k_+$ is the positive, $k_i$ are the negatives, $\text{sim}$ is a similarity function (like dot product), and $\tau$ is a temperature hyperparameter.


\clearpage

\section*{NeurIPS Paper Checklist}

\begin{enumerate}

\item {\bf Claims}
    \item[] Question: Do the main claims made in the abstract and introduction accurately reflect the paper's contributions and scope?
    \item[] Answer: \answerYes{} 
    \item[] Justification: We list our contribution in introduction, as shown in Section~\ref{sec:intro}.
    \item[] Guidelines:
    \begin{itemize}
        \item The answer NA means that the abstract and introduction do not include the claims made in the paper.
        \item The abstract and/or introduction should clearly state the claims made, including the contributions made in the paper and important assumptions and limitations. A No or NA answer to this question will not be perceived well by the reviewers. 
        \item The claims made should match theoretical and experimental results, and reflect how much the results can be expected to generalize to other settings. 
        \item It is fine to include aspirational goals as motivation as long as it is clear that these goals are not attained by the paper. 
    \end{itemize}

\item {\bf Limitations}
    \item[] Question: Does the paper discuss the limitations of the work performed by the authors?
    \item[] Answer: \answerYes{} 
    \item[] Justification: We discuss limitations of our method in Section~\ref{sec:limitation}, including the Scalability to Large-scale Circuits, Multiple and Overlapping Fuzzy Boundaries and Assumptions.
    \item[] Guidelines:
    \begin{itemize}
        \item The answer NA means that the paper has no limitation while the answer No means that the paper has limitations, but those are not discussed in the paper. 
        \item The authors are encouraged to create a separate "Limitations" section in their paper.
        \item The paper should point out any strong assumptions and how robust the results are to violations of these assumptions (e.g., independence assumptions, noiseless settings, model well-specification, asymptotic approximations only holding locally). The authors should reflect on how these assumptions might be violated in practice and what the implications would be.
        \item The authors should reflect on the scope of the claims made, e.g., if the approach was only tested on a few datasets or with a few runs. In general, empirical results often depend on implicit assumptions, which should be articulated.
        \item The authors should reflect on the factors that influence the performance of the approach. For example, a facial recognition algorithm may perform poorly when image resolution is low or images are taken in low lighting. Or a speech-to-text system might not be used reliably to provide closed captions for online lectures because it fails to handle technical jargon.
        \item The authors should discuss the computational efficiency of the proposed algorithms and how they scale with dataset size.
        \item If applicable, the authors should discuss possible limitations of their approach to address problems of privacy and fairness.
        \item While the authors might fear that complete honesty about limitations might be used by reviewers as grounds for rejection, a worse outcome might be that reviewers discover limitations that aren't acknowledged in the paper. The authors should use their best judgment and recognize that individual actions in favor of transparency play an important role in developing norms that preserve the integrity of the community. Reviewers will be specifically instructed to not penalize honesty concerning limitations.
    \end{itemize}

\item {\bf Theory assumptions and proofs}
    \item[] Question: For each theoretical result, does the paper provide the full set of assumptions and a complete (and correct) proof?
    \item[] Answer: \answerYes{} 
    \item[] Justification: We proposed 4 properties of functional subgraph in our paper, and the proof can be found in Appendix~\ref{apdx:property}. We also clairfy our assumption in Section~\ref{sec:property}.
    \item[] Guidelines:
    \begin{itemize}
        \item The answer NA means that the paper does not include theoretical results. 
        \item All the theorems, formulas, and proofs in the paper should be numbered and cross-referenced.
        \item All assumptions should be clearly stated or referenced in the statement of any theorems.
        \item The proofs can either appear in the main paper or the supplemental material, but if they appear in the supplemental material, the authors are encouraged to provide a short proof sketch to provide intuition. 
        \item Inversely, any informal proof provided in the core of the paper should be complemented by formal proofs provided in appendix or supplemental material.
        \item Theorems and Lemmas that the proof relies upon should be properly referenced. 
    \end{itemize}

    \item {\bf Experimental result reproducibility}
    \item[] Question: Does the paper fully disclose all the information needed to reproduce the main experimental results of the paper to the extent that it affects the main claims and/or conclusions of the paper (regardless of whether the code and data are provided or not)?
    \item[] Answer: \answerYes{} 
    \item[] Justification: For dataset processing, we provide the details in Section~\ref{sec:data} and Appendix~\ref{apdx:dataset}. For our models, we first detailed our pipeline in Section~\ref{sec:method}.
    Futhermore, our used backbones are all opensource and the implementations details can also be found in Section~\ref{sec:data} and Appendix~\ref{apdx:dataset}.
    \item[] Guidelines:
    \begin{itemize}
        \item The answer NA means that the paper does not include experiments.
        \item If the paper includes experiments, a No answer to this question will not be perceived well by the reviewers: Making the paper reproducible is important, regardless of whether the code and data are provided or not.
        \item If the contribution is a dataset and/or model, the authors should describe the steps taken to make their results reproducible or verifiable. 
        \item Depending on the contribution, reproducibility can be accomplished in various ways. For example, if the contribution is a novel architecture, describing the architecture fully might suffice, or if the contribution is a specific model and empirical evaluation, it may be necessary to either make it possible for others to replicate the model with the same dataset, or provide access to the model. In general. releasing code and data is often one good way to accomplish this, but reproducibility can also be provided via detailed instructions for how to replicate the results, access to a hosted model (e.g., in the case of a large language model), releasing of a model checkpoint, or other means that are appropriate to the research performed.
        \item While NeurIPS does not require releasing code, the conference does require all submissions to provide some reasonable avenue for reproducibility, which may depend on the nature of the contribution. For example
        \begin{enumerate}
            \item If the contribution is primarily a new algorithm, the paper should make it clear how to reproduce that algorithm.
            \item If the contribution is primarily a new model architecture, the paper should describe the architecture clearly and fully.
            \item If the contribution is a new model (e.g., a large language model), then there should either be a way to access this model for reproducing the results or a way to reproduce the model (e.g., with an open-source dataset or instructions for how to construct the dataset).
            \item We recognize that reproducibility may be tricky in some cases, in which case authors are welcome to describe the particular way they provide for reproducibility. In the case of closed-source models, it may be that access to the model is limited in some way (e.g., to registered users), but it should be possible for other researchers to have some path to reproducing or verifying the results.
        \end{enumerate}
    \end{itemize}

\item {\bf Open access to data and code}
    \item[] Question: Does the paper provide open access to the data and code, with sufficient instructions to faithfully reproduce the main experimental results, as described in supplemental material?
    \item[] Answer: \answerYes{} 
    \item[] Justification: We provide the code and data in our supplemental material.
    \item[] Guidelines:
    \begin{itemize}
        \item The answer NA means that paper does not include experiments requiring code.
        \item Please see the NeurIPS code and data submission guidelines (\url{https://nips.cc/public/guides/CodeSubmissionPolicy}) for more details.
        \item While we encourage the release of code and data, we understand that this might not be possible, so “No” is an acceptable answer. Papers cannot be rejected simply for not including code, unless this is central to the contribution (e.g., for a new open-source benchmark).
        \item The instructions should contain the exact command and environment needed to run to reproduce the results. See the NeurIPS code and data submission guidelines (\url{https://nips.cc/public/guides/CodeSubmissionPolicy}) for more details.
        \item The authors should provide instructions on data access and preparation, including how to access the raw data, preprocessed data, intermediate data, and generated data, etc.
        \item The authors should provide scripts to reproduce all experimental results for the new proposed method and baselines. If only a subset of experiments are reproducible, they should state which ones are omitted from the script and why.
        \item At submission time, to preserve anonymity, the authors should release anonymized versions (if applicable).
        \item Providing as much information as possible in supplemental material (appended to the paper) is recommended, but including URLs to data and code is permitted.
    \end{itemize}

\item {\bf Experimental setting/details}
    \item[] Question: Does the paper specify all the training and test details (e.g., data splits, hyperparameters, how they were chosen, type of optimizer, etc.) necessary to understand the results?
    \item[] Answer: \answerYes{} 
    \item[] Justification: We specify the experimental details in Section~\ref{sec:data} and Appendix~\ref{apdx:dataset}.
    \item[] Guidelines:
    \begin{itemize}
        \item The answer NA means that the paper does not include experiments.
        \item The experimental setting should be presented in the core of the paper to a level of detail that is necessary to appreciate the results and make sense of them.
        \item The full details can be provided either with the code, in appendix, or as supplemental material.
    \end{itemize}

\item {\bf Experiment statistical significance}
    \item[] Question: Does the paper report error bars suitably and correctly defined or other appropriate information about the statistical significance of the experiments?
    \item[] Answer: \answerYes{} 
    \item[] Justification: We report 1-sigma error bars, \ie average result with standard error in Table~\ref{tab:stage1} and ~\ref{tab:stage2}.
    \item[] Guidelines:
    \begin{itemize}
        \item The answer NA means that the paper does not include experiments.
        \item The authors should answer "Yes" if the results are accompanied by error bars, confidence intervals, or statistical significance tests, at least for the experiments that support the main claims of the paper.
        \item The factors of variability that the error bars are capturing should be clearly stated (for example, train/test split, initialization, random drawing of some parameter, or overall run with given experimental conditions).
        \item The method for calculating the error bars should be explained (closed form formula, call to a library function, bootstrap, etc.)
        \item The assumptions made should be given (e.g., Normally distributed errors).
        \item It should be clear whether the error bar is the standard deviation or the standard error of the mean.
        \item It is OK to report 1-sigma error bars, but one should state it. The authors should preferably report a 2-sigma error bar than state that they have a 96\% CI, if the hypothesis of Normality of errors is not verified.
        \item For asymmetric distributions, the authors should be careful not to show in tables or figures symmetric error bars that would yield results that are out of range (e.g. negative error rates).
        \item If error bars are reported in tables or plots, The authors should explain in the text how they were calculated and reference the corresponding figures or tables in the text.
    \end{itemize}

\item {\bf Experiments compute resources}
    \item[] Question: For each experiment, does the paper provide sufficient information on the computer resources (type of compute workers, memory, time of execution) needed to reproduce the experiments?
    \item[] Answer: \answerYes{} 
    \item[] Justification: The detail about compute resources can be found in Appendix~\ref{apdx:dataset}.
    \item[] Guidelines:
    \begin{itemize}
        \item The answer NA means that the paper does not include experiments.
        \item The paper should indicate the type of compute workers CPU or GPU, internal cluster, or cloud provider, including relevant memory and storage.
        \item The paper should provide the amount of compute required for each of the individual experimental runs as well as estimate the total compute. 
        \item The paper should disclose whether the full research project required more compute than the experiments reported in the paper (e.g., preliminary or failed experiments that didn't make it into the paper). 
    \end{itemize}
    
\item {\bf Code of ethics}
    \item[] Question: Does the research conducted in the paper conform, in every respect, with the NeurIPS Code of Ethics \url{https://neurips.cc/public/EthicsGuidelines}?
    \item[] Answer: \answerYes{} 
    \item[] Justification: We make sure research conducted in the paper conform with the NeurIPS Code of Ethics.
    \item[] Guidelines:
    \begin{itemize}
        \item The answer NA means that the authors have not reviewed the NeurIPS Code of Ethics.
        \item If the authors answer No, they should explain the special circumstances that require a deviation from the Code of Ethics.
        \item The authors should make sure to preserve anonymity (e.g., if there is a special consideration due to laws or regulations in their jurisdiction).
    \end{itemize}

\item {\bf Broader impacts}
    \item[] Question: Does the paper discuss both potential positive societal impacts and negative societal impacts of the work performed?
    \item[] Answer: \answerYes{} 
    \item[] Justification: We discuss the broader impacts in Section~\ref{sec:conclusion}.
    \item[] Guidelines:
    \begin{itemize}
        \item The answer NA means that there is no societal impact of the work performed.
        \item If the authors answer NA or No, they should explain why their work has no societal impact or why the paper does not address societal impact.
        \item Examples of negative societal impacts include potential malicious or unintended uses (e.g., disinformation, generating fake profiles, surveillance), fairness considerations (e.g., deployment of technologies that could make decisions that unfairly impact specific groups), privacy considerations, and security considerations.
        \item The conference expects that many papers will be foundational research and not tied to particular applications, let alone deployments. However, if there is a direct path to any negative applications, the authors should point it out. For example, it is legitimate to point out that an improvement in the quality of generative models could be used to generate deepfakes for disinformation. On the other hand, it is not needed to point out that a generic algorithm for optimizing neural networks could enable people to train models that generate Deepfakes faster.
        \item The authors should consider possible harms that could arise when the technology is being used as intended and functioning correctly, harms that could arise when the technology is being used as intended but gives incorrect results, and harms following from (intentional or unintentional) misuse of the technology.
        \item If there are negative societal impacts, the authors could also discuss possible mitigation strategies (e.g., gated release of models, providing defenses in addition to attacks, mechanisms for monitoring misuse, mechanisms to monitor how a system learns from feedback over time, improving the efficiency and accessibility of ML).
    \end{itemize}
    
\item {\bf Safeguards}
    \item[] Question: Does the paper describe safeguards that have been put in place for responsible release of data or models that have a high risk for misuse (e.g., pretrained language models, image generators, or scraped datasets)?
    \item[] Answer: \answerNA{} 
    \item[] Justification: The paper poses no such risks.
    \item[] Guidelines:
    \begin{itemize}
        \item The answer NA means that the paper poses no such risks.
        \item Released models that have a high risk for misuse or dual-use should be released with necessary safeguards to allow for controlled use of the model, for example by requiring that users adhere to usage guidelines or restrictions to access the model or implementing safety filters. 
        \item Datasets that have been scraped from the Internet could pose safety risks. The authors should describe how they avoided releasing unsafe images.
        \item We recognize that providing effective safeguards is challenging, and many papers do not require this, but we encourage authors to take this into account and make a best faith effort.
    \end{itemize}

\item {\bf Licenses for existing assets}
    \item[] Question: Are the creators or original owners of assets (e.g., code, data, models), used in the paper, properly credited and are the license and terms of use explicitly mentioned and properly respected?
    \item[] Answer: \answerYes{} 
    \item[] Justification: We cite the original paper or website of assets.
    \item[] Guidelines:
    \begin{itemize}
        \item The answer NA means that the paper does not use existing assets.
        \item The authors should cite the original paper that produced the code package or dataset.
        \item The authors should state which version of the asset is used and, if possible, include a URL.
        \item The name of the license (e.g., CC-BY 4.0) should be included for each asset.
        \item For scraped data from a particular source (e.g., website), the copyright and terms of service of that source should be provided.
        \item If assets are released, the license, copyright information, and terms of use in the package should be provided. For popular datasets, \url{paperswithcode.com/datasets} has curated licenses for some datasets. Their licensing guide can help determine the license of a dataset.
        \item For existing datasets that are re-packaged, both the original license and the license of the derived asset (if it has changed) should be provided.
        \item If this information is not available online, the authors are encouraged to reach out to the asset's creators.
    \end{itemize}

\item {\bf New assets}
    \item[] Question: Are new assets introduced in the paper well documented and is the documentation provided alongside the assets?
    \item[] Answer: \answerYes{} 
    \item[] Justification: We detail our dataset processing flow and model in our paper.
    \item[] Guidelines:
    \begin{itemize}
        \item The answer NA means that the paper does not release new assets.
        \item Researchers should communicate the details of the dataset/code/model as part of their submissions via structured templates. This includes details about training, license, limitations, etc. 
        \item The paper should discuss whether and how consent was obtained from people whose asset is used.
        \item At submission time, remember to anonymize your assets (if applicable). You can either create an anonymized URL or include an anonymized zip file.
    \end{itemize}

\item {\bf Crowdsourcing and research with human subjects}
    \item[] Question: For crowdsourcing experiments and research with human subjects, does the paper include the full text of instructions given to participants and screenshots, if applicable, as well as details about compensation (if any)? 
    \item[] Answer: \answerNA{} 
    \item[] Justification: Our paper does not involve crowdsourcing nor research with human subjects
    \item[] Guidelines:
    \begin{itemize}
        \item The answer NA means that the paper does not involve crowdsourcing nor research with human subjects.
        \item Including this information in the supplemental material is fine, but if the main contribution of the paper involves human subjects, then as much detail as possible should be included in the main paper. 
        \item According to the NeurIPS Code of Ethics, workers involved in data collection, curation, or other labor should be paid at least the minimum wage in the country of the data collector. 
    \end{itemize}

\item {\bf Institutional review board (IRB) approvals or equivalent for research with human subjects}
    \item[] Question: Does the paper describe potential risks incurred by study participants, whether such risks were disclosed to the subjects, and whether Institutional Review Board (IRB) approvals (or an equivalent approval/review based on the requirements of your country or institution) were obtained?
    \item[] Answer: \answerNA{} 
    \item[] Justification: Our paper does not involve crowdsourcing nor research with human subjects.
    \item[] Guidelines:
    \begin{itemize}
        \item The answer NA means that the paper does not involve crowdsourcing nor research with human subjects.
        \item Depending on the country in which research is conducted, IRB approval (or equivalent) may be required for any human subjects research. If you obtained IRB approval, you should clearly state this in the paper. 
        \item We recognize that the procedures for this may vary significantly between institutions and locations, and we expect authors to adhere to the NeurIPS Code of Ethics and the guidelines for their institution. 
        \item For initial submissions, do not include any information that would break anonymity (if applicable), such as the institution conducting the review.
    \end{itemize}

\item {\bf Declaration of LLM usage}
    \item[] Question: Does the paper describe the usage of LLMs if it is an important, original, or non-standard component of the core methods in this research? Note that if the LLM is used only for writing, editing, or formatting purposes and does not impact the core methodology, scientific rigorousness, or originality of the research, declaration is not required.
    \item[] Answer: \answerNA{} 
    \item[] Justification: The core method development in this research does not involve LLMs as any important, original, or non-standard components.
    \item[] Guidelines:
    \begin{itemize}
        \item The answer NA means that the core method development in this research does not involve LLMs as any important, original, or non-standard components.
        \item Please refer to our LLM policy (\url{https://neurips.cc/Conferences/2025/LLM}) for what should or should not be described.
    \end{itemize}

\end{enumerate}

\end{document}